\newcommand{\tabincell}[2]{\begin{tabular}{@{} #1 @{}} #2 \end{tabular}}
\newcommand{\citet}[1]{\citeauthor{#1}~\shortcite{#1}}
\newcommand{\citep}[1]{\citeauthor{#1},~\citeyear{#1}}
\newcommand{\citenop}[1]{\citeauthor{#1}~\citeyear{#1}}
\newcommand{\re}{\mathbb{R}}
\newcommand{\sph}{\mathbb{S}}
\newcommand{\expect}{\mathbb{E}}
\newcommand{\stiefel}{\mathbb{M}}
\newcommand{\stein}{\mathcal{A}}
\newcommand{\cont}{\mathcal{C}}
\newcommand{\hilb}{\mathcal{H}}
\newcommand{\fisher}{\mathcal{I}}
\newcommand{\obj}{\mathcal{J}}
\newcommand{\mf}{\mathcal{M}}
\newcommand{\normal}{\mathcal{N}}
\newcommand{\order}{\mathcal{O}}
\newcommand{\tg}{\mathcal{T}}
\newcommand{\ud}{\mathrm{d}}
\newcommand{\Bern}{\mathrm{Bern}}
\newcommand{\dir}{\mathrm{Dir}}
\renewcommand{\div}{\mathrm{div}}
\newcommand{\Exp}{\mathrm{Exp}}
\newcommand{\grad}{\mathrm{grad}\,}
\newcommand{\gradn}{\mathrm{grad}}
\newcommand{\jac}{\mathrm{Jac}\,}
\newcommand{\kl}{\mathrm{KL}}
\newcommand{\prob}{\mathrm{Prob}}
\newcommand{\tr}{\mathrm{tr}}
\newcommand{\vmf}{\mathrm{vMF}}
\newcommand{\const}{\mathrm{const}}
\newcommand{\subtg}{\mathfrak{X}}
\newcommand{\pdf}{p.d.f.~}
\newcommand{\wrt}{w.r.t.~}
\newcommand{\trs}{^{\top}}
\newcommand{\defas}{:=}
\theoremstyle{plain}
\newtheorem{thm}{Theorem}
\newtheorem{lem}[thm]{Lemma}
\newtheorem{prop}[thm]{Proposition}
\newtheorem{cor}[thm]{Corollary}
\theoremstyle{remark}
\begin{document}
%
\title{Riemannian Stein Variational Gradient Descent for Bayesian Inference}
\author{
  Chang Liu, \, Jun Zhu\thanks{corresponding author.}\\
    Dept. of Comp. Sci. \& Tech., TNList Lab; Center for Bio-Inspired Computing Research \\
    State Key Lab for Intell. Tech. \& Systems, Tsinghua University, Beijing, China \\
  \texttt{chang-li14@mails.tsinghua.edu.cn;~dcszj@tsinghua.edu.cn}
}
\maketitle
\begin{abstract}
  We develop Riemannian Stein Variational Gradient Descent (RSVGD), a Bayesian inference method that generalizes Stein Variational Gradient Descent (SVGD) to Riemann manifold.
  The benefits are two-folds: (i) for inference tasks in Euclidean spaces, RSVGD has the advantage over SVGD of utilizing information geometry, and (ii) for inference tasks on Riemann manifolds, RSVGD brings the unique advantages of SVGD to the Riemannian world.
  To appropriately transfer to Riemann manifolds, we conceive novel and non-trivial techniques for RSVGD, which are required by the intrinsically different characteristics of general Riemann manifolds from Euclidean spaces.
  We also discover Riemannian Stein's Identity and Riemannian Kernelized Stein Discrepancy.
  Experimental results show the advantages over SVGD of exploring distribution geometry and the advantages of particle-efficiency, iteration-effectiveness and approximation flexibility over other inference methods on Riemann manifolds.
\end{abstract}

\section{Introduction}

Bayesian inference is the central task for learning a Bayesian model to extract knowledge from data.
The task is to estimate the posterior distribution of latent variables of the model given observed data.
It has been in the focus of machine learning for decades, with quite a lot of methods emerging.
Variational inference methods (VIs) aim to approximate the posterior by a tractable variational distribution.
Traditional VIs typically use a statistical model, usually a parametric distribution family, as the variational distribution, and we call them model-based VIs (M-VIs).
They cast the inference problem as an optimization problem, which can be efficiently solved by various techniques.
However, due to the restricted coverage of the chosen distribution family (e.g. the mean-field form), there would be a gap blocking the approximation from getting any closer. 
Monte Carlo methods (MCs) estimate the posterior by directly drawing samples from it.
Asymptotically accurate as they are, their performance for finite samples is not guaranteed and usually take effect slowly, especially for the widely applicable thus commonly used Markov Chain MCs (MCMCs), due to the positive autocorrelation of their samples.

Recently, a set of particle-based VIs (P-VIs) have been proposed.
P-VIs use a certain number of samples, or particles, to represent the variational distribution, and update the particles by solving an optimization problem.
Similar to MCs, this non-parametric particle form gives them great flexibility to reduce the gap of M-VIs, and beyond MCs, the optimization-based update rule makes them effective in iteration: every iteration is guaranteed to make progress.
Although there are convergence analyses for some particular MCMCs from non-stationary to stationary, the principle of a general MCMC only guarantees that the sampler will remain in the stationary distribution.
Moreover, MCs usually require a large sample size to take effect, while P-VIs achieve similar performance with much fewer particles, since their principle aims at finite sample performance.
This particle-efficiency would save the storage of the inference result, and reduce the time for tasks afterwards such as test and prediction.
Table~\ref{tab:cmp} presents a comparison of the three kinds of inference methods.

Stein variational gradient descent (SVGD) \cite{liu2016stein} is an outstanding example of P-VIs.
SVGD updates particles by imposing a continuous-time dynamics on them that leads the variational distribution to evolve towards the posterior.
Although the dynamics is restricted in a kernel-related space for tractability, the theory of SVGD makes no assumption on the variational distribution, indicating the best flexibility.
SVGD has been applied to develop advanced inference methods (\citenop{wang2016learning}; \citenop{pu2017stein}) as well as reinforcement learning (\citenop{liu2017stein}; \citenop{haarnoja2017reinforcement}).
Other instances of P-VIs include normalizing flows (NF) \cite{rezende2015variational} and particle mirror descent (PMD) \cite{dai2016provable}.
NF uses a series of invertible transformations to adjust particles from a simple tractable distribution to fit the posterior.
However, the invertibility requirement restricts its flexibility.
PMD adopts the mirror descent method to formulate an optimization problem and uses a weighted kernel density estimator for the variational distribution, which is still a restricting assumption.

\begin{table}[t]
  \caption{A comparison of three kinds of inference methods.}
  \label{tab:cmp}
  \centering
  \begin{tabular}{cccc}
    \toprule
	Methods & M-VIs & MCs & P-VIs \\
	\midrule\midrule
	\tabincell{c}{Asymptotic \\ Accuracy} & No & Yes & Promising \\
	\midrule
	\tabincell{c}{Approximation \\ Flexibility} & Limited & Unlimited & Unlimited \\
	\midrule
	\tabincell{c}{Iteration- \\ Effectiveness} &  Yes & Weak & Strong \\
	\midrule
	\tabincell{c}{Particle- \\ Efficiency} & \tabincell{c}{(do not\\apply)} & Weak & Strong \\
    \bottomrule
  \end{tabular}
\end{table}

Another issue of Bayesian inference is the collaboration with Riemann manifold.
This consideration bears its importance in two ways:
(i) the posterior of some models itself is a distribution on a given Riemann manifold, e.g. the spherical admixture model (SAM) \cite{reisinger2010spherical} has its posterior on hyperspheres;
(ii) latent variables in a Bayesian model is a natural coordinate system (c.s.) of the Riemann manifold of likelihood distribution, so we can also conduct Bayesian inference on the distribution manifold with improved efficiency with the help of information geometry (\citenop{amari2007methods}; \citenop{amari2016information}).
Much progress has been made recently for both Riemannian considerations.
In the spirit of (i), \citet{bonnabel2013stochastic} and \citet{zhang2016riemannian} develop scalable and stable optimization methods on Riemann manifold to enhance inference for M-VIs, and \citet{brubaker2012family}, \citet{byrne2013geodesic} and \citet{liu2016stochastic} develop efficient and scalable MCMCs on Riemann manifold.
In the spirit of (ii), \citet{hoffman2013stochastic} use natural gradient for M-VI, \citet{girolami2011riemann} and \citet{ma2015complete} develop efficient MCMCs with Riemann structure, and \citet{li2016preconditioned} apply the Riemannian MCMCs to the inference of Bayesian neural network.
Little work has been done to enhance P-VIs with Riemann structure.
\citet{gemici2016normalizing} attempt to generalize NF to Riemann manifold, but their method cannot be used for manifolds with no global c.s., such as hypersphere.
Although the method can be implemented in the almost-global c.s., unbounded distortion near the boundary of the c.s. would unavoidably occur, which would cause numerical instability.

In this work, we develop Riemannian Stein Variational Gradient Descent (RSVGD), the extension of SVGD to Riemann manifold.
Our method can be applied for both approximating the posterior on Riemann manifold (i), and efficient inference by exploring the Riemann structure of distributions (ii).
RSVGD inherits the significant advantages of SVGD, bringing the benefits to the field of inference on Riemann manifold, such as particle-efficiency and zero variational assumption.
Technically, it is highly non-trivial to extent the idea to Riemann manifold, as many subtle properties of Riemann manifold must be carefully considered, which may lead to completely different treatment.
We first review SVGD as an evolution under a dynamics of a flow and generalize the deduction to Riemann manifold.
Then we solve for the optimal dynamics by a novel method, where the treatment of SVGD fails due to the intrinsically different properties of general Riemann manifold from Euclidean space.
The expression in the embedded space is also derived for application to manifolds with no global c.s. like hyperspheres, which does not require choosing a c.s. and introduce no numerical problems.
As side products, we also develop Riemannian Stein's identity and Riemannian kernelized Stein discrepancy, as an extension of the corresponding concepts.
Finally, we apply RSVGD to the troublesome inference task of SAM, with its unique advantages validated in experiments.


\section{Preliminaries}

\subsection{Riemann Manifolds}

We briefly introduce basic concepts of Riemann manifold.
For more details please refer to common textbooks e.g. \citet{do1992riemannian}; \citet{abraham2012manifolds}.

\subsubsection{Basics}
Denote $\mf$ an $m$-dimensional Riemann manifold.
By definition at every point $A\in\mf$ there exists a local coordinate system (c.s.) $(U,\Phi)$, where $U\subset\mf$ is open and contains $A$, and $\Phi: U\to\re^m$ a homeomorphism between $U$ and $\Phi(U)$.
Denote $\cont_A^{\infty}$ and $\cont^{\infty}(\mf)$ as the set of functions $\mf\to\re$ that are smooth around $A$ and all over $\mf$, respectively.
A tangent vector $v$ at $A$ is a linear functional $\cont_A^{\infty}\to\re$ that satisfies $v[(fg)(\cdot)]=f(A)v[g(\cdot)]+g(A)v[f(\cdot)], \forall f,g\in\cont_A^{\infty}$.
Intuitively $v[f]$ is the directional derivative of $f$ at $A$ along the direction of $v$.
All such $v$ forms an $m$-dimensional linear space $T_A\mf$, called the tangent space at $A$.
Its natural basis $\{\partial_i\}_{i=1}^m$ under $(U,\Phi)$ is defined as $\partial_i(A)[f]\defas \frac{\partial (f\circ \Phi^{-1})}{\partial x^i}(x^1,\dots,x^m)|_{\Phi(A)}$ (also denoted as $\partial_i f|_A$).
We can then express $v$ in component: $v=v^i\partial_i$, where we adopt Einstein's convention that duplicated subscript and superscript are summed out.
A vector field $X$ on $\mf$ specifies at every $A\in\mf$ a tangent vector $X(A)\in T_A\mf$ smoothly with respect to (w.r.t.) $A$.
Denote $\tg(\mf)$ as the set of all such $X$.
A Riemann structure is equipped to $\mf$ if $\forall A\in\mf$, $T_A\mf$ is endowed with an inner product $g_A(\cdot,\cdot)$ (and $g_A$ is smooth in $A$).
In $(U,\Phi)$, for $u=u^i\partial_i, v=v^j\partial_j$, $g_A(u,v)=g_{ij}(A) u^i v^j$, where $g_{ij}(A)=g_A(\partial_i, \partial_j)$.

An object called Riemann volume form $\mu_g$ can be used to define a measure on $\mf$ via integral: the measure of a compact region $U\subset\mf$ is defined as $\int_U \mu_g$.
Thus for any probability distribution absolutely continuous \wrt $\mu_g$, we can define its probability density function (p.d.f.) $p$ \wrt $\mu_g$: $\prob(U) = \int_U p\mu_g$.
In the sequel, we require distributions such that their \pdf are smooth functions on $\mf$, and we would say ``distribution with \pdf $p$'' as ``distribution $p$''. 

\subsubsection{Flows, Dynamics and Evolving Distributions}
These concepts constitute the fundamental idea of SVGD and our RSVGD.
The notion of flow arises from the following fact:
for a vector field $X$ and a fixed point $A\in\mf$, there exist a subset $U\subset\mf$ containing $A$, and a one-parameter transformation $F_{(\cdot)}(\cdot): (-\delta, \delta)\times U \to \mf$ where $\delta\in\re$, such that $F_0(\cdot)$ is the identity map on $U$, and for $B\in U$ and $t_0\in(-\delta,\delta)$, $\frac{\ud}{\ud t}f(F_t(B))|_{t=t_0} = X(F_{t_0}(B))[f], \forall f\in\cont_{F_{t_0}(B)}^{\infty}$ (\citep{do1992riemannian}, Page~28).
We call $F_{(\cdot)}(\cdot)$ the local flow of $X$ around $A$.
Under some condition, e.g. $X$ has compact support, there exists a local flow that is global (i.e. $U=\mf$), which is called the \textbf{flow} of $X$.

We refer a \textbf{dynamics} here as a rule governing the motion on $\mf$ over time $t$.
Specifically, a dynamics gives the position $A(t)$ at any future time for any given initial position $A(0)$.
If $t\in\re$ and $A(t)$ is smooth for all $A(0)\in\mf$, we call it a continuous-time dynamics.
Obviously, a flow can be used to define a continuous-time dynamics: $A(t) = F_t(A_0), A(0) = A_0$.
Due to the correspondence between a vector field and a flow, we can also define a dynamics by a vector field $X$, and denote it as $\frac{\ud A}{\ud t} = X$.
Thus we would say ``the dynamics defined by the flow of $X$'' as ``dynamics $X$'' in the following.

Let a random variable (r.v.) obeying some distribution $p$ move under dynamics $X$ from time $0$ to $t$, which acts as a transformation on the r.v.
Then the distribution of the transformed r.v. also evolves along time $t$, and we denote it as $p_t$, and call it an \textbf{evolving distribution} under dynamics $X$.
Suppose there is a set of particles $\{A^{(i)}\}_{i=1}^N$ that distributes as $p$.
Let each particle move individually under dynamics $X$ for time $t$, then the new set of particles distributes as $p_t$.

\subsection{Reynolds Transport Theorem}

Reynolds transport theorem helps us to relate an evolving distribution to the corresponding dynamics.
It is a generalization of the rule of differentiation under integral, and is the foundation of fluid mechanics.
Let $X\in\tg(\mf)$ and $F_{(\cdot)}(\cdot)$ be its flow.
For smooth $f_{(\cdot)}(\cdot): \re\times\mf \to \re$ and any open subset $U\subset\mf$, the theorem states that
\begin{align*}
  \frac{\ud}{\ud t}\int_{F_t(U)} f_t \mu_g = \int_{F_t(U)} \left( \frac{\partial f_t}{\partial t} + \div(f_t X) \right) \mu_g,
\end{align*}
where $\div: \tg(\mf) \to \cont^{\infty}(\mf)$ is the divergence of a vector field.
In any local c.s., $\div(X) = \partial_i(\sqrt{|G|}X^i)/\sqrt{|G|}$, where $G$, commonly called Riemann metric tensor, is the $m\times m$ matrix comprised of $g_{ij}$ in that c.s., and $|G|$ is its determinant.
More details can be found in e.g. (\citep{romano2007continuum}, Page~164); (\citep{frankel2011geometry}, Page~142); (\citep{abraham2012manifolds}, Page~469).

\subsection{Stein Variational Gradient Descent (SVGD)}

We review SVGD~\cite{liu2016stein} from a perspective inspiring our generalization work to Riemann manifold.
SVGD is a particle-based variational inference method.
It updates particles by applying an appropriate dynamics on them so that the distribution of the particles, an evolving distribution, approaches (in the KL-divergence sense) the target distribution.
Note that in this case the manifold is the most common Euclidean space $\mf = \re^m$.
Denote the evolving distribution of the particles under dynamics $X$ as $q_t$, and let $p$ be the target distribution.
The first key result of SVGD is
\begin{align}
  -\frac{\ud}{\ud t}\kl(q_t || p) = \expect_{q_t} [X\trs \nabla\log p + \nabla\trs X],
  \label{eqn:olddirderiv}
\end{align}
which measures the rate of $q_t$ to approach $p$.
As our desiderata is to maximize $-\kl(q_t||p)$, a desirable dynamics should maximize this approaching rate.
This is analogous to the process of finding gradient when we want to maximize a function $f$: $\nabla f(x) = \alpha_0 v_0, (\alpha_0, v_0) = \max_{\|v\|=1} f'_v(x)$, where $f'_v(x)$ is the directional derivative along direction $v$.
Similarly, we call $-\frac{\ud}{\ud t}\kl(q_t||p)$ the \textbf{directional derivative} along vector field $X$, and \cite{liu2016stein} claims that the direction and magnitude of \textbf{functional gradient} of $-\kl(q_t||p)$ coincides with the maximizer and maximum of the directional derivative, and the dynamics of the functional gradient (itself a vector field) is the desired one.
Simulating this dynamics and repeating the procedure updates the particles to be more and more representative for the target distribution.

The above result reveals the relation between the directional derivative and dynamics $X$.
To find the functional gradient, we get the next task of solving the maximization problem.
Note that in Euclidean case, the tangent space at any point is isometrically isomorphic to $\re^m$, so $X$ can be described as $m$ smooth functions on $\re^m$.
\citet{liu2016stein} take $X$ from the product space $\hilb_K^m$ of the reproducing kernel Hilbert space (RKHS) $\hilb_K$ of some smooth kernel $K$ on $\re^m$.
$\hilb_K$ is a Hilbert space of some smooth functions on $\re^m$.
Its key property is that for any $A\in\re^m$, $K(A,\cdot)\in\hilb_K$, and $\langle f(\cdot),K(A,\cdot) \rangle_{\hilb_K} = f(A), \forall f\in\hilb_K$.
In $\hilb_K^m$ the maximization problem can be solved in closed form, which gives the functional gradient:
\begin{align}
  X^*(\cdot) = \expect_{q_t(A)} [K(A,\cdot)\nabla\log p(A) + \nabla K(A,\cdot)].
  \label{eqn:oldfuncgrad}
\end{align}
Note that the sample distribution $q_t$ appears only in the form of expectation, which can be estimated by merely samples from $q_t$.
This releases the assumption on the explicit form of $q_t$ thus gives it great flexibility.
It is a benefit of using KL-divergence to measure the difference between two distributions.

\section{Riemannian SVGD}

We now present Riemannian Stein Variational Gradient Descent (RSVGD), which has
many non-trivial considerations beyond SVGD and requires novel treatments.
We first derive the Riemannian counterpart of the directional derivative, then conceive a novel technique to find the functional gradient, in which case the SVGD technique fails.
Finally we express RSVGD in the embedded space of the manifold and give an instance for hyperspheres, which is directly used for our application.

\subsection{Derivation of the Directional Derivative}

Now $\mf$ is a general Riemann manifold.
We first derive a useful tool for deriving the directional derivative.
\begin{lem}[Continuity Equation on Riemann Manifold]
  \label{thm:cont}
  Let $p_t$ be the evolving distribution under dynamics $X$, where $X\in\tg(\mf)$. Then
  \begin{align}
	\frac{\partial p_t}{\partial t} = -\div(p_t X) = -X[p_t] - p_t \div(X). 
	\label{eqn:cont}
  \end{align}
\end{lem}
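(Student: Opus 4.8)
The plan is to obtain \eqref{eqn:cont} from the Reynolds transport theorem together with conservation of probability mass along the flow. Let $F_{(\cdot)}$ be the flow of $X$ (global when, e.g., $X$ has compact support; otherwise the argument below is carried out locally around each point of $\mf$). By the very definition of the evolving distribution, a r.v. distributed as $p_0$ is transported by $F_t$, so $p_t$ is the pushforward of $p_0$ under $F_t$; equivalently, for every open $U\subset\mf$,
\begin{align*}
  \int_{F_t(U)} p_t\,\mu_g = \int_U p_0\,\mu_g ,
\end{align*}
and the right-hand side is independent of $t$.

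Next I would differentiate this identity in $t$. The left-hand side has exactly the form to which the Reynolds transport theorem applies, with $f_t = p_t$ (smooth in $(t,\cdot)$ since $F_t$ and $p_0$ are smooth), giving
\begin{align*}
  0 = \frac{\ud}{\ud t}\int_{F_t(U)} p_t\,\mu_g = \int_{F_t(U)} \left( \frac{\partial p_t}{\partial t} + \div(p_t X) \right)\mu_g .
\end{align*}
Fixing $t$ and letting $U$ range over all open sets, $F_t(U)$ ranges over all open sets as well (since $F_t$ is a diffeomorphism), so $\int_V \big( \partial_t p_t + \div(p_t X)\big)\mu_g = 0$ for every open $V\subset\mf$. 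Because the integrand is continuous, a standard localization argument (if it were nonzero at some point it would keep a definite sign on a small neighborhood, contradicting the vanishing of the integral there) forces $\partial_t p_t + \div(p_t X) = 0$ pointwise, for every $t$. This is the first equality of \eqref{eqn:cont}.

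Finally, the second equality is the Leibniz rule for the divergence of a scalar times a vector field. Working in an arbitrary local c.s. and using the coordinate formula $\div(Y) = \partial_i(\sqrt{|G|}\,Y^i)/\sqrt{|G|}$ recalled in the Reynolds-theorem subsection, I expand
\begin{align*}
  \div(p_t X) = \frac{\partial_i\big(\sqrt{|G|}\,p_t X^i\big)}{\sqrt{|G|}} = (\partial_i p_t)X^i + p_t\,\frac{\partial_i\big(\sqrt{|G|}\,X^i\big)}{\sqrt{|G|}} = X[p_t] + p_t\,\div(X),
\end{align*}
where $X[p_t] = X^i\partial_i p_t$ is the directional derivative of $p_t$ along $X$; this expression is coordinate-independent, as it must be.

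I expect the main obstacle to be the rigorous justification of the mass-conservation identity and the passage from the integral identity to the pointwise one: one must check that $p_t$ is genuinely the pushforward of $p_0$ (and not merely agrees with it in some weaker sense), that the smoothness hypotheses of the Reynolds transport theorem are met by $p_t$, and, when the flow is only local, that the pointwise conclusion is legitimately patched together over a neighborhood of each point of $\mf$. Once these are settled, the remaining computations are routine.
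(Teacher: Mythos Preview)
Your proposal is correct and follows essentially the same approach as the paper: use conservation of probability mass along the flow to make $\int_{F_t(U)} p_t\,\mu_g$ constant in $t$, apply the Reynolds transport theorem, and conclude that the integrand vanishes pointwise by arbitrariness of $U$. You add a bit more detail than the paper (the explicit Leibniz expansion for $\div(p_t X)$ and the localization argument), but the strategy is identical.
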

See Appendix\footnote{Appendix available at \url{http://ml.cs.tsinghua.edu.cn/~changliu/rsvgd/Liu-Zhu-appendix.pdf}} A1 for derivation, which involves Reynolds transport theorem.

We now present our first key result, the directional derivative of $-\kl(q_t||p)$\footnote{
  Appendix A2 presents the well-definedness of the KL-divergence on Riemann manifold $\mf$.
} \wrt vector field $X$.
\begin{thm}[Directional Derivative]
  \label{thm:dirderiv}
  Let $q_t$ be the evolving distribution under dynamics $X$, and $p$ a fixed distribution. Then the directional derivative is
  \begin{align*}
	-\frac{\ud}{\ud t} \kl(q_t||p) \!=\! \expect_{q_t} [\div(p X) / p] \!=\! \expect_{q_t} \!\big[\! X[\log p] \!+\! \div(\!X\!) \big].
  \end{align*}
\end{thm}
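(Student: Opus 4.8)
The plan is to differentiate the KL-divergence directly, use the Continuity Equation (Lemma~\ref{thm:cont}) to eliminate the time derivative of $q_t$, and then integrate by parts via the divergence theorem on $\mf$. Concretely, write $\kl(q_t\|p) = \int_{\mf} q_t \log(q_t/p)\,\mu_g = \int_{\mf} q_t \log q_t \, \mu_g - \int_{\mf} q_t \log p \, \mu_g$, and differentiate under the integral sign. The second term gives $-\frac{\ud}{\ud t}\int q_t \log p\,\mu_g = -\int \frac{\partial q_t}{\partial t}\log p\,\mu_g$ since $p$ is fixed. For the first (entropy) term, $\frac{\ud}{\ud t}\int q_t\log q_t\,\mu_g = \int \left(\frac{\partial q_t}{\partial t}\log q_t + \frac{\partial q_t}{\partial t}\right)\mu_g$; the bare $\frac{\partial q_t}{\partial t}$ integrates to $\frac{\ud}{\ud t}\int q_t\,\mu_g = 0$ since $q_t$ stays a probability density, so only $\int \frac{\partial q_t}{\partial t}\log q_t\,\mu_g$ survives. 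Combining, $-\frac{\ud}{\ud t}\kl(q_t\|p) = -\int \frac{\partial q_t}{\partial t}\log(q_t/p)\,\mu_g$.

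Next I would substitute $\frac{\partial q_t}{\partial t} = -\div(q_t X)$ from Lemma~\ref{thm:cont}, giving $-\frac{\ud}{\ud t}\kl(q_t\|p) = \int_{\mf} \div(q_t X)\,\log(q_t/p)\,\mu_g$. Now apply the divergence theorem on the manifold: for a vector field $Y$ and a smooth function $h$, $\int_{\mf} h\,\div(Y)\,\mu_g = -\int_{\mf} Y[h]\,\mu_g$ (the boundaryless/compact-support case, or suitable decay at infinity — this is exactly the Riemannian integration-by-parts identity underlying Reynolds' theorem). Taking $Y = q_t X$ and $h = \log(q_t/p)$, this yields $-\frac{\ud}{\ud t}\kl(q_t\|p) = -\int_{\mf} q_t X[\log(q_t/p)]\,\mu_g = \int_{\mf} q_t\,\big(X[\log p] - X[\log q_t]\big)\,\mu_g$. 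The term $\int q_t X[\log q_t]\,\mu_g = \int X[q_t]\,\mu_g$, and by integrating by parts once more (equivalently, $\int \div(q_t X')$-type manipulation with the constant function), $\int_{\mf} X[q_t]\,\mu_g = -\int_{\mf} q_t\,\div(X)\,\mu_g$. Hence $-\frac{\ud}{\ud t}\kl(q_t\|p) = \int_{\mf} q_t\big(X[\log p] + \div(X)\big)\,\mu_g = \expect_{q_t}[X[\log p] + \div(X)]$. To get the first equality in the statement, observe $X[\log p] + \div(X) = X[p]/p + \div(X) = \big(X[p] + p\,\div(X)\big)/p = \div(pX)/p$ by the product rule for divergence, $\div(pX) = X[p] + p\,\div(X)$.

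The main obstacle I anticipate is justifying the boundary terms in the two integrations by parts: on a general Riemann manifold without boundary the divergence theorem $\int_{\mf}\div(Y)\mu_g = 0$ requires either compactness of $\mf$ or sufficient decay of $Y = q_t X$ (and of $q_t X[\log(q_t/p)]$), and one must argue that the relevant vector fields have compact support or that $q_t$ decays fast enough — consistent with the standing smoothness/integrability assumptions on the distributions and the earlier remark (stated near the flow discussion) that $X$ may be taken with compact support. A secondary technical point is legitimizing differentiation under the integral sign, which needs a dominated-convergence-type bound uniform in $t$ near the point of evaluation; given the smoothness hypotheses on $q_t$, $p$, and $f_t$ already assumed for Reynolds' theorem, this should be routine. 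Everything else — the entropy-term cancellation, the product rule for $\div$, and the algebraic rearrangement into $\div(pX)/p$ — is straightforward manifold calculus.
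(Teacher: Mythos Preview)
Your argument is correct, but it follows a genuinely different route from the paper's. You differentiate $\kl(q_t\|p)$ directly, invoke the continuity equation for $q_t$, and then integrate by parts twice on $\mf$ to remove the $q_t$-dependence from inside the differential operators. The paper instead introduces an auxiliary evolving density $p_t$ under the \emph{same} dynamics $X$, normalized so that $p_{t_0}=p$, and uses a transformed-\pdf lemma together with the global change-of-variables theorem to pull the flow $F_t$ back through the integral; after cancellation the $q_{t_0}$ factor becomes time-independent and all $t$-dependence sits in $\log p_{t_0-t}$, so a single application of the continuity equation to $p_t$ (not $q_t$) finishes the computation.

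Your approach is the standard ``Wasserstein gradient flow'' computation and is arguably more transparent; its cost is that the two integrations by parts each require the boundary/decay hypothesis you flagged, so the Stein-class condition enters the derivation twice. The paper's approach trades those two integrations by parts for a change-of-variables identity and a single use of Lemma~\ref{thm:cont}; the boundary issue is thereby confined to the continuity-equation lemma (via Reynolds' theorem) rather than appearing in the KL computation itself, and one gets for free the pleasant duality that pushing $q$ forward along $X$ is equivalent, inside the KL, to pulling $p$ backward along $X$. Either way the algebraic endpoint $\div(pX)/p = X[\log p]+\div(X)$ is the same, and your final product-rule step is exactly right.
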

See Appendix A3 for proof.
Theorem~\ref{thm:dirderiv} is an extension of the key result of SVGD Eqn.~(\ref{eqn:olddirderiv}) to general Riemann manifold case.
In SVGD terms, we call $\stein_p X = X[\log p] + \div(X)$ the generalized Stein's operator.

Appendix A4 further discusses in detail the corresponding condition for Stein's identity to hold in this case.
Stein's identity refers to the equality $-\left.\frac{\ud}{\ud t}\kl(q_t||p)\right|_{t=t_0} = 0$ for any $t_0$ such that $q_{t_0}=p$.
Stein class is the set of $X$ that makes Stein's identity hold.
Stein's identity indicates that when $q_t$ reaches its optimal configuration $p$, the directional derivative along any direction $X$ in the Stein class is zero, in analogy to the well-known zero gradient condition in optimization.
These concepts play an important rule when using the functional gradient to measure the distance between two distributions, i.e. the Stein discrepancy \cite{liu2016kernelized}.

\subsection{Derivation of the Functional Gradient}
Now that we have the directional derivative expressed in terms of $X$, we get the maximization problem to find the functional gradient:
\begin{align}
  \max_{X\in\subtg, \|X\|_{\subtg}=1} \obj(X) \defas \expect_{q} \big[ X[\log p] + \div(X) \big],
  \label{eqn:obj}
\end{align}
where we omit the subscript of $q_t$ since it is fixed when optimizing \wrt $X$.
Ideally $\subtg$ should be $\tg(\mf)$, but for a tractable solution we may restrict $\subtg$ to some subset of $\tg(\mf)$ which is at least a normed space.
Once we get the maximizer of $\obj(X)$, denoted by $X^o$, we have the functional gradient $X^* = \obj(X^o)X^o \in \tg(\mf)$.

\subsubsection{Requirements}
Before we choose an appropriate $\subtg$, we first list three requirements on a reasonable $X^*$ (or equivalently on $X^o$ since they differ only in scale).
The first two requirements arise from special properties of general Riemann manifolds, which are so different from Euclidean spaces that make SVGD technique fail.

\begin{itemize}
  \item R1: $X^*$ is a valid vector field on $\mf$;
  \item R2: $X^*$ is coordinate invariant;
  \item R3: $X^*$ can be expressed in closed form, where $q$ appears only in terms of expectation.
\end{itemize}

We require R1 since all the above deductions are based on vector fields.
In the Euclidean case, any set of $m$ smooth functions $(f^1,\dots,f^m)$ satisfies R1.
But in general Riemann manifold, it is not enough that all the $m$ components of a vector field is smooth in any coordinate system (c.s.).
For example, due to the hairy ball theorem (\citep{abraham2012manifolds}, Theorem~8.5.13), vector fields on an even-dimensional hypersphere must have one zero-vector-valued point (critical point), which goes beyond the above condition.
This disables the idea to use the SVGD technique in the coordinate space of a manifold.

R2 is required to avoid ambiguity or arbitrariness of the solution.
Coordinate invariance is a key concept in the area of differential manifold.
By definition the most basic way to access a general manifold is via c.s., so we can define an object on the manifold by its expression in c.s.
If the form of the expression in any c.s. is the same, or equivalently the expression in a certain form refers to the same object in any c.s., we say that the form of the object is coordinate invariant.
For intersecting c.s. $(U,\Phi)$ and $(V,\Psi)$, if the expression in each c.s. with a same form gives different results, then the object is ambiguous on $U\cap V$.
One may argue that it is possible to use one certain c.s. to uniquely define the object, whatever the form in that c.s., but firstly for manifolds without global c.s., e.g. hyperspheres, we cannot define an object globally on the manifold in this manner, and secondly for manifolds that have global c.s., the choice of the certain c.s. may be arbitrary, since all the c.s. are equivalent to each other: there is no objective reason for choosing one specific c.s. other than other c.s.

Expression $\grad f = g^{ij} \partial_i f \partial_j$ is coordinate invariant, while $\nabla f = \sum_i \partial_i f \partial_i$ is not.
Suppose the above two expressions are written in c.s. $(U,\{x^i\}_{i=1}^m)$.
Let $(V,\{y^a\}_{a=1}^m)$ be another c.s. with $\tilde\partial_a$ and $\tilde g^{ab}$ the corresponding objects on it.
On $U\cap V$, $\grad f = \tilde g^{ab} \tilde\partial_a f \tilde\partial_b$, but $\nabla f = (\sum_i \frac{\partial y^a}{\partial x^i}\frac{\partial y^b}{\partial x^i}) \tilde\partial_a f \tilde\partial_b$ while we expect $\sum_a \tilde\partial_a f \tilde\partial_a$, which do not match for general c.s.
Consequently, the functional gradient of SVGD Eqn.~(\ref{eqn:oldfuncgrad}) is not coordinate invariant.
Again, the SVGD technique does not meet our demand here.

R3 cuts off the computational burden for optimizing \wrt $X$, and releases the assumption on the form of $q$ so as to adopt the benefit of SVGD of flexibility.

Before presenting our solution, we would emphasize the difficulty by listing some possible attempts that actually fail.
First note that $\subtg$ is a subspace of $\tg(\mf)$, which is a linear space on both $\re$ and $\cont^{\infty}(\mf)$.
But in either case $\tg(\mf)$ is infinite dimensional and it is intractable to express vector fields in it.
Secondly, as the treatment of SVGD, one may consider expressing vector fields component-wise.
This idea specifies a tangent vector at each point, thus would easily violate R1 and R2, due to the intrinsically different characteristic of general Riemann manifolds from Euclidean space: tangent spaces at different points are not the same.
R1 and R2 focus on global properties, one has to link tangent spaces at different points.
Taking a c.s. could uniformly express the tangent vectors on the c.s., but this is not enough as stated above.
One may also consider transforming tangent vectors at different points to one certain pivot point by parallel transport, but the arbitrariness of the pivot point would dissatisfy R2.
The third possibility is to view a vector field as a map from $\mf$ to $T\mf$, the tangent bundle of $\mf$, but $T\mf$ is generally only a manifold but not a linear space, so we cannot apply theories of learning vector-valued functions (e.g. \citet{micchelli2005learning}).

\subsubsection{Solution}

We first present our solution, then check the above requirements.
Note that theories of kernel and reproducing kernel Hilbert space (RKHS) (see e.g. \citet{steinwart2008support}, Chapter~4) also apply to manifold case.
Let $K:\mf\times\mf\to\re$ be a smooth kernel on $\mf$ and $\hilb_K$ its RKHS.
We require $K$ such that zero function is the only constant function in $\hilb_K$.
Such kernels include the commonly used Gaussian kernel (\citep{steinwart2008support}, Corollary~4.44).

We choose $\subtg=\{\grad f|f\in\hilb_K\}$, where $\mathrm{grad}: \cont^{\infty}(\mf)\to\tg(\mf)$ is the gradient of a smooth function, which is a valid vector field.
In any c.s., $\grad f = g^{ij}\partial_i f \partial_j$, where $g^{ij}$ is the $(i,j)$-th entry of $G^{-1}$.
The following result indicates that $\subtg$ can be made into an inner product space.
\begin{lem}
  \label{thm:subtg}
  With a proper inner product, $\subtg$ is isometrically isomorphic to $\hilb_K$, thus a Hilbert space.
\end{lem}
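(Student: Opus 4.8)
The plan is to exhibit an explicit isometric isomorphism. Consider the map $\Psi:\hilb_K\to\subtg$ given by $\Psi(f)\defas\grad f$. I will show $\Psi$ is a linear bijection, then \emph{define} the inner product on $\subtg$ by pulling back that of $\hilb_K$ along $\Psi$, i.e. $\langle\grad f,\grad h\rangle_{\subtg}\defas\langle f,h\rangle_{\hilb_K}$; by construction $\Psi$ becomes an isometric isomorphism, and completeness of $\subtg$ is then inherited from $\hilb_K$.

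First I would verify that $\Psi$ is well defined and linear. Every $f\in\hilb_K$ is smooth on $\mf$ (the RKHS of a smooth kernel consists of smooth functions), so $\grad f=g^{ij}\partial_i f\,\partial_j$ is a genuine smooth vector field, hence $\Psi(f)\in\subtg\subset\tg(\mf)$; linearity of $\gradn$ is immediate from this coordinate expression. Surjectivity is nothing but the definition $\subtg=\{\grad f\mid f\in\hilb_K\}$. The substantive point is injectivity: since $\gradn$ is linear it suffices to show $\grad f\equiv 0\Rightarrow f=0$ in $\hilb_K$. If $\grad f\equiv 0$, then in any c.s. $g^{ij}\partial_i f=0$ identically, and because $G$ (hence $G^{-1}$) is positive definite this forces $\partial_i f\equiv 0$ for every $i$, so $f$ is locally constant, hence constant on the connected manifold $\mf$. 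The standing hypothesis that the zero function is the only constant function in $\hilb_K$ now yields $f=0$.

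With $\Psi$ a linear bijection, the formula $\langle\grad f,\grad h\rangle_{\subtg}\defas\langle f,h\rangle_{\hilb_K}$ is unambiguous precisely because each element of $\subtg$ has a unique $\hilb_K$-preimage under $\Psi$; the inner-product axioms (bilinearity, symmetry, positive definiteness) transfer verbatim from $\hilb_K$, so $\subtg$ is an inner product space and $\Psi$ is by definition an isometry as well as an isomorphism. For completeness, a Cauchy sequence $\grad f_n$ in $\subtg$ corresponds under the isometry to a Cauchy sequence $f_n$ in the complete space $\hilb_K$; its limit $f$ satisfies $\|\grad f_n-\grad f\|_{\subtg}=\|f_n-f\|_{\hilb_K}\to 0$, so $\subtg$ is complete, i.e. a Hilbert space, isometrically isomorphic to $\hilb_K$.

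I expect the only real obstacle to be the injectivity step — concretely, justifying that $\grad f\equiv 0$ implies $f$ constant (which uses connectedness of $\mf$, an implicit standing assumption) and then invoking the constant-function hypothesis on $\hilb_K$ to conclude $f=0$. Everything else is routine transport of structure along a bijection.
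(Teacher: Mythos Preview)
Your proposal is correct and follows essentially the same approach as the paper: define the map $f\mapsto\grad f$, check linearity, surjectivity by definition, injectivity via the ``only constant in $\hilb_K$ is zero'' hypothesis, and then pull back the $\hilb_K$ inner product along the resulting bijection. You supply slightly more detail than the paper (spelling out why $\grad f\equiv 0$ forces $f$ constant via positive definiteness of $G$ and connectedness, and making the completeness argument explicit), but the structure is identical.
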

\begin{proof}
  Define $\iota:\hilb_K\to\subtg, f\mapsto\grad f$, which is linear: $\forall\alpha\in\re,f,h\in\hilb_K,\iota(\alpha f+h) = \alpha\grad f + \grad h = \alpha\iota(f) + \iota(h)$.
  For any $f,h\in\hilb_K$ that satisfy $\iota(f)=\iota(h)$, we have $\grad(f-h)=0$, $f-h={\const}_{\hilb_K} = 0_{\hilb_K}$, where the last equality holds for our requirement on $K$.
  So $f=h$ thus $\iota$ is injective.
  By definition of $\subtg$, $\iota$ is surjective, so it is an isomorphism between $\subtg$ and $\hilb_K$.

  Define $\langle\cdot,\cdot\rangle_{\subtg}: \langle X,Y \rangle_{\subtg} = \langle \iota^{-1}(X),\iota^{-1}(Y) \rangle_{\hilb_K}$, $\forall X,Y\in\subtg$.
  By the linearity of $\iota$ and that $\langle \cdot,\cdot \rangle_{\hilb_K}$ is an inner product, one can easily verify that $\langle\cdot,\cdot\rangle_{\subtg}$ is an inner product on $\subtg$, and that $\iota$ is an isometric isomorphism.
\end{proof}

Next we present our second key result, that the objective $\obj(X)$ can be cast as an inner product in $\subtg$.
\begin{thm}
  \label{thm:optsolution}
  For $(\subtg,\langle \cdot,\cdot \rangle_{\subtg})$ defined above and $\obj$ the objective in Eqn.~(\ref{eqn:obj}), we have $\obj(X) = \langle X,\hat X \rangle_{\subtg}$, where
  \begin{gather}
	\hat X=\grad\hat f,\nonumber\\
	\hat f(\cdot) = \expect_{q(A)}\Big[\big(\grad K\!(A,\cdot)\big)[\log p(A)] + \Delta K\!(A,\cdot)\Big],
	\label{eqn:optsolution}
  \end{gather}
  and $\Delta f \defas \div(\grad f)$ is the Beltrami-Laplace operator. 
\end{thm}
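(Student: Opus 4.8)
The plan is to transport the whole computation into the RKHS $\hilb_K$ through the isometric isomorphism $\iota:\hilb_K\to\subtg,\ f\mapsto\grad f$ of Lemma~\ref{thm:subtg}, thereby reducing the claim to the scalar identity $\obj(\grad f)=\langle f,\hat f\rangle_{\hilb_K}$ for every $f\in\hilb_K$, where $\hat f$ is the function in Eqn.~(\ref{eqn:optsolution}). Indeed, once this identity is established, Lemma~\ref{thm:subtg} gives $\langle f,\hat f\rangle_{\hilb_K}=\langle\iota(f),\iota(\hat f)\rangle_{\subtg}=\langle\grad f,\grad\hat f\rangle_{\subtg}=\langle X,\hat X\rangle_{\subtg}$ for $X=\grad f$, which is exactly the assertion; it also shows $\hat X=\grad\hat f=\iota(\hat f)\in\subtg$ as soon as $\hat f\in\hilb_K$.

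First I would rewrite the objective at $X=\grad f$. Since $\div(\grad f)=\Delta f$ by definition of the Beltrami--Laplace operator, Eqn.~(\ref{eqn:obj}) becomes $\obj(\grad f)=\expect_{q(A)}\big[(\grad f)(A)[\log p(A)]+\Delta f(A)\big]$. The core tool is the differentiability of the RKHS: because $K$ is smooth on $\mf\times\mf$, in any chart around $A$ the functions $\partial^{(1)}_i K(A,\cdot)$ and $\partial^{(1)}_i\partial^{(1)}_j K(A,\cdot)$ (derivatives in the first slot) lie in $\hilb_K$ and reproduce the corresponding derivatives of $f$, i.e.\ $\partial_i f(A)=\langle f,\partial^{(1)}_i K(A,\cdot)\rangle_{\hilb_K}$ and $\partial_i\partial_j f(A)=\langle f,\partial^{(1)}_i\partial^{(1)}_j K(A,\cdot)\rangle_{\hilb_K}$ (see e.g.\ \citet{steinwart2008support}, Chapter~4). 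Plugging these into the chart expressions $(\grad f)(A)[\log p(A)]=g^{ij}(A)\,\partial_i f(A)\,\partial_j\log p(A)$ and $\Delta f(A)=\tfrac{1}{\sqrt{|G(A)|}}\,\partial_i\!\big(\sqrt{|G|}\,g^{ij}\partial_j f\big)(A)$, and using linearity of $\langle\cdot,\cdot\rangle_{\hilb_K}$, I would pull $f$ out and land at $(\grad f)(A)[\log p(A)]=\langle f,(\grad K(A,\cdot))[\log p(A)]\rangle_{\hilb_K}$ and $\Delta f(A)=\langle f,\Delta K(A,\cdot)\rangle_{\hilb_K}$, where on the right $\grad$ and $\Delta$ act on the first argument of $K$. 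A separate check confirms that although these two objects are assembled in a chart, they are the genuine coordinate-invariant gradient of $K(\cdot,B)$ (a tangent vector at $A$ for each fixed second argument) and its Laplace--Beltrami value, so there is no chart ambiguity in the result.

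Next I would move the $q$-expectation inside the inner product. The maps $A\mapsto(\grad K(A,\cdot))[\log p(A)]$ and $A\mapsto\Delta K(A,\cdot)$ take values in $\hilb_K$, so under the mild integrability condition $\expect_q\|\cdot\|_{\hilb_K}<\infty$ the Bochner integral $\hat f\defas\expect_{q(A)}\big[(\grad K(A,\cdot))[\log p(A)]+\Delta K(A,\cdot)\big]$ exists in $\hilb_K$; since the bounded linear functional $\langle f,\cdot\rangle_{\hilb_K}$ commutes with the Bochner integral, $\expect_{q(A)}\big[\langle f,g(A)\rangle_{\hilb_K}\big]=\langle f,\hat f\rangle_{\hilb_K}$ for the $\hilb_K$-valued integrand $g(A)$. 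Combining with the previous step gives $\obj(\grad f)=\langle f,\hat f\rangle_{\hilb_K}$ with $\hat f\in\hilb_K$, and the reduction of the first paragraph then finishes the proof.

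The main obstacle I anticipate is the rigorous handling of differentiation inside the RKHS on a manifold together with its coordinate invariance: one must verify that the chart-wise derivative-reproducing identities patch into a bona fide tangent vector $(\grad K)(A,\cdot)$ at $A$ (resp.\ a bona fide scalar $\Delta K(A,\cdot)$) that is independent of the chart, so that $\hat f$ is a well-defined element of $\hilb_K$ and not a chart artifact --- the transformation rules for $\partial_i f$, $g^{ij}$ and $\sqrt{|G|}$ must conspire exactly as they do for the ordinary gradient and Laplace--Beltrami operator. A secondary, more routine issue is verifying the integrability/dominated-convergence hypotheses needed both to exchange differentiation with the $\hilb_K$ inner product and to exchange the $q$-expectation with it (e.g.\ boundedness of $K$ and its first two derivatives together with $\expect_q\|\grad\log p\|<\infty$).
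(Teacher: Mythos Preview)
Your proposal is correct and follows essentially the same route as the paper's proof: pass through the isometric isomorphism $\iota$ of Lemma~\ref{thm:subtg}, expand $\obj(\grad f)$ in a local chart, invoke the RKHS derivative--reproducing identities (the paper cites \citet{zhou2008derivative} rather than \citet{steinwart2008support}) to write each term as $\langle f,\cdot\rangle_{\hilb_K}$, and then pull the $q$-expectation inside by linearity. The only cosmetic difference is bookkeeping---the paper groups the $\sqrt{|G|}$ factor with the first-order term rather than with the Laplacian---and your added remarks on coordinate invariance and Bochner integrability are refinements the paper leaves implicit.
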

\begin{proof}[Proof (sketch)]
  Use $f=\iota^{-1}(X)\in\hilb_K$ to express the vector field $X=\grad f$.
  Expand $\obj$ in some c.s., then cast partial derivatives into the form of inner product in $\hilb_K$ based on the results of \citet{zhou2008derivative}.
  Rearranging terms by the linearity of inner product and expectation, $\obj$ can be written as an inner product in $\hilb_K$, thus an inner product in $\subtg$ due to their isomorphism.
  See Appendix A5 for details.
\end{proof}

In the form of an inner product, the maximization problem~(\ref{eqn:obj}) solves as: $X^o = \hat X/\|\hat X\|_{\subtg}$.
Then the functional gradient is $X^* = \obj(X^o)X^o = \frac{\langle \hat X,\hat X \rangle_{\subtg}}{\|\hat X\|_{\subtg}} \cdot \frac{\hat X}{\|\hat X\|_{\subtg}} = \hat X$.
\begin{cor}(Functional Gradient)
  \label{thm:funcgrad}
  For $(\subtg,\langle \cdot,\cdot \rangle_{\subtg})$ taken as above, the functional gradient is $X^* = \hat X$.
\end{cor}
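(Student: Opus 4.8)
The plan is to derive the corollary directly from Theorem~\ref{thm:optsolution} together with a Cauchy--Schwarz argument for the constrained maximization~(\ref{eqn:obj}). By Theorem~\ref{thm:optsolution}, the objective is the bounded linear functional $\obj(X) = \langle X, \hat X\rangle_{\subtg}$ on the Hilbert space $(\subtg, \langle\cdot,\cdot\rangle_{\subtg})$ furnished by Lemma~\ref{thm:subtg}. First I would record that $\hat X$ indeed lies in $\subtg$: the expectation in~(\ref{eqn:optsolution}) defines an element $\hat f\in\hilb_K$ (by the derivative-reproducing properties of $\hilb_K$ invoked in the proof of Theorem~\ref{thm:optsolution}), hence $\hat X = \grad\hat f = \iota(\hat f)\in\subtg$ by the very definition of $\subtg$. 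This is needed for $\hat X$ to be an admissible candidate in problem~(\ref{eqn:obj}).

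Next I would solve $\max_{X\in\subtg,\ \|X\|_{\subtg}=1}\langle X,\hat X\rangle_{\subtg}$. By Cauchy--Schwarz in $\subtg$, $\langle X,\hat X\rangle_{\subtg}\le\|X\|_{\subtg}\|\hat X\|_{\subtg}=\|\hat X\|_{\subtg}$ for every unit $X$, with equality exactly when $X$ is the unit-norm positive multiple of $\hat X$, i.e. $X^o=\hat X/\|\hat X\|_{\subtg}$; in the degenerate case $\hat X=0$ the objective vanishes identically and the functional gradient is the zero field, which is consistent with $X^*=\hat X=0$. Thus the maximum value is $\obj(X^o)=\langle X^o,\hat X\rangle_{\subtg}=\|\hat X\|_{\subtg}$.

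Finally I would assemble the functional gradient from its definition $X^*=\obj(X^o)\,X^o$: $X^* = \|\hat X\|_{\subtg}\cdot\frac{\hat X}{\|\hat X\|_{\subtg}} = \hat X$. Every step is an elementary consequence of Theorem~\ref{thm:optsolution} and the Hilbert-space structure of Lemma~\ref{thm:subtg}, so I do not expect a genuine obstacle here; the only points warranting a line of care are the verification $\hat f\in\hilb_K$ (so that $\hat X\in\subtg$) and the trivial case $\hat X=0$. All the substantive work has already been carried out in establishing Theorem~\ref{thm:optsolution}.
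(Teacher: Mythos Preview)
Your proposal is correct and follows essentially the same route as the paper: use Theorem~\ref{thm:optsolution} to rewrite $\obj(X)=\langle X,\hat X\rangle_{\subtg}$, maximize over the unit sphere to get $X^o=\hat X/\|\hat X\|_{\subtg}$, and then compute $X^*=\obj(X^o)X^o=\hat X$. You add a couple of points the paper leaves implicit (the explicit Cauchy--Schwarz justification, the membership $\hat X\in\subtg$, and the degenerate case $\hat X=0$), but the argument is the same.
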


Now we check our solution for the three requirements.
Since $\grad$, $\div$, $\Delta$, expectation, and the action of tangent vector on smooth function $v[f]$ are all coordinate invariant objects, $\hat f$ is coordinate invariant thus a valid smooth function on $\mf$.
So its gradient $\hat X$ is a valid vector field (R1) and also coordinate invariant (R2).
R3 is obvious from Eqn.~(\ref{eqn:optsolution}).

As a bonus, we present the optimal value of the objective:
\begin{align*}
  \obj(\hat X) =& \expect_{q}\expect_{q'}\Big[ \big(\gradn'\log p'\big)\big[ (\grad\log p)[K] \big] + \Delta'\Delta K \\
  & + (\gradn'\log p')[\Delta K] + (\grad\log p)[\Delta'K] \Big],
\end{align*}
where $K=K(A,A')$, and notations with prime ``$\,'\,$'' take $A'$ as argument and others take $A$.
We call it Riemannian Kernelized Stein Discrepancy (RKSD) between distributions $q$ and $p$, a generalization of Kernelized Stein Discrepancy (\citenop{liu2016kernelized}; \citenop{chwialkowski2016kernel}).

After deriving the functional gradient $X^* = \hat X$, we can simulate its dynamics in any c.s. (denoted as $(U,\{x^i\}_{i=1}^m)$) by $x^i(t+\varepsilon) = x^i(t) + \varepsilon \hat X^i(x(t))$ for each component $i$, which is a 1st-order approximation of the flow of $\hat X$, where
\begin{align}
  \hat X^i(A') =  g'^{ij} \partial'_j \expect_q \Big[& \big( g^{ab}\partial_a\log(p\sqrt{|G|}) + \partial_a g^{ab} \big) \partial_b K \nonumber\\
  & + g^{ab}\partial_a\partial_b K \Big].
  \label{eqn:csvec}
\end{align}
This is the update rule of particles for inference tasks on Euclidean space, where the expectation is estimated by averaging over current particles, and the Riemann metric tensor $g_{ij}$ is taken as the subtraction of the Fisher information of likelihood with the Hessian of prior p.d.f., as adopted by \citet{girolami2011riemann}.
Note that $g^{ij}$ is the entry of the inverse matrix.

\subsection{Expression in the Embedded Space}

From the above discussion, we can simulate the optimal dynamics in c.s.
But it is not always the most convenient approach.
For some manifolds, like hyperspheres $\sph^{n-1}\defas\{x\in\re^n | \|x\|_2 = 1\}$ or Stiefel manifold $\stiefel_{m,n}\defas\{M\in\re^{m\times n} | M\trs M = I_m \}$ \cite{james1976topology}, on one hand, they have no global c.s. so we have to change c.s. constantly while simulating, and we have to compute $g^{ij}$, $|G|$ and $\partial_i K$ in each c.s.
It is even hard to find a c.s. of a Stiefel manifold.
On the other hand, such manifolds are defined as a subset of some Euclidean space, 
which is a natural embedded space.
This motivates us to express the dynamics of the functional gradient in the embedded space and simulate in it.

Formally, an embedding of a manifold $\mf$ is a smooth injection $\Xi:\mf\to\re^n$ for some $n\ge m$.
For a Riemann manifold, $\Xi$ is isometric if $g_{ij} = \sum_{\alpha=1}^n \frac{\partial y^{\alpha}}{\partial x^i}\frac{\partial y^{\alpha}}{\partial x^j}$ in any c.s. $(U,\Phi)$, where $\frac{\partial y^{\alpha}}{\partial x^i}$ is for $y=\xi(x)$ with $\xi \defas \Xi\circ\Phi^{-1}$.
The Hausdorff measure on $\Xi(\mf)\subset\re^n$ induces a measure on $\mf$, \wrt which we have \pdf $p_H$.
We recognize that for isometric embedding, $p=p_H$.

Since $\Xi$ is injective, $\Xi^{-1}: \Xi(\mf) \to \mf$ can be well-defined, so can $\xi^{-1}$ for any c.s.
The following proposition gives a general expression of the functional gradient in an isometrically embedded space.
\begin{prop}
  \label{thm:genemb}
  Let all the symbols take argument in the isometrically embedded space $\re^n$ (with orthonormal basis $\{y^{\alpha}\}_{\alpha=1}^n)$) via composed with $\Xi^{-1}$ or $\xi^{-1}$.
  We have $\hat X' = (I_n - N'{N'}\trs)\nabla' \hat f'$,
  \begin{align}
	\hat f' = \expect_{q} \Big[& \Big(\nabla\log \big(p\sqrt{|G|}\big)\Big)\trs \Big(I_n - NN\trs\Big) (\nabla K) \nonumber\\
	& + \nabla\trs\nabla K - \tr\Big( N\trs(\nabla\nabla\trs K)N \Big) \nonumber\\
	& + \Big( (M\trs\nabla)\trs (G^{-1} M\trs) \Big) (\nabla K) \Big],
	\label{eqn:genemb}
  \end{align}
  where $I_n\in\re^{n\times n}$ is the identity matrix, $\nabla = (\partial_{y^1}, \dots, \partial_{y^n})\trs$, $M\in\re^{n\times m}: M_{\alpha i} = \frac{\partial y^{\alpha}}{\partial x^i}$, $N(A)\in\re^{n\times(n-m)}$ is the set of orthonormal basis of the orthogonal complement of $\Xi_*(T_A\mf)$ (the $\Xi$-pushed-forward tangent space, an $m$-dimensional linear subspace of $\re^n$)
, and $\tr(\cdot)$ is the trace of a matrix.
\end{prop}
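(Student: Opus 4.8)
The plan is to take the intrinsic form of the functional gradient supplied by Theorem~\ref{thm:optsolution} and Corollary~\ref{thm:funcgrad} --- $\hat X=\grad\hat f$ with $\hat f(B)=\expect_{q(A)}\big[\big(\gradn_A K(A,B)\big)[\log p]+\Delta_A K(A,B)\big]$ --- and rewrite the three building blocks $\gradn$, $v[\cdot]$ and $\Delta$ occurring in the $A$-integrand in terms of the isometric embedding $\Xi$, and then do the same once more for the outer $\gradn$ in $B$. Fix a chart $(U,\{x^i\})$ and write $M_{\alpha i}=\partial y^\alpha/\partial x^i$ for the Jacobian of $\xi=\Xi\circ\Phi^{-1}$; isometry of $\Xi$ is exactly the statement $G=M\trs M$. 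The first thing I would establish is that $P\defas MG^{-1}M\trs$ is symmetric and idempotent with column space $\Xi_*(T_A\mf)$, the span of the columns of $M$; hence $P$ is the orthogonal projector onto $\Xi_*(T_A\mf)$, and since $N$ has orthonormal columns spanning the orthogonal complement, $NN\trs$ is the complementary projector, so $P=I_n-NN\trs$. I would also record the elementary chain-rule identity that the operator tuple $M\trs\nabla$, applied to any smooth ambient extension $\tilde h$ of an $h\in\cont^{\infty}(\mf)$, returns the tuple of chart derivatives $(\partial_{x^1}h,\dots,\partial_{x^m}h)$, and that $\Xi_*\partial_j$ is the $j$-th column of $M$.

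The key translation lemma is that for $h\in\cont^{\infty}(\mf)$ and any smooth extension $\tilde h$, the pushed-forward Riemannian gradient is the tangential part of the ambient gradient, $\Xi_*(\gradn h)=P\nabla\tilde h=(I_n-NN\trs)\nabla\tilde h$, which is independent of the extension. This follows by writing $\gradn h=g^{ij}\partial_i h\,\partial_j$, pushing forward with $\Xi_*\partial_j=$ $j$-th column of $M$, and substituting $\partial_i h=(M\trs\nabla\tilde h)_i$. Because $\Xi$ is isometric it preserves inner products of tangent vectors, so inner products of Riemannian gradients become Euclidean: $g_A(\gradn h_1,\gradn h_2)=(\nabla\tilde h_1)\trs(I_n-NN\trs)(\nabla\tilde h_2)$. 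Applying this with $h_2=K(\cdot,B)$, with $h_1$ equal to the chart function $\log(p\sqrt{|G|})$ (the factor $\sqrt{|G|}$ being the extra piece contributed by $\Delta$, as explained next), and using $\big(\gradn_A K\big)[\log p]=g_A(\gradn_A\log p,\gradn_A K)$, converts the first piece of the integrand into the first summand of Eqn.~(\ref{eqn:genemb}).

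The remaining work --- and the main obstacle --- is the Laplace--Beltrami term, where the curvature of the embedding enters. Starting from the divergence formula $\div(X)=\partial_i(\sqrt{|G|}X^i)/\sqrt{|G|}$ one gets $\Delta_A K=(g^{ab}\partial_a\log\sqrt{|G|}+\partial_a g^{ab})\partial_b K+g^{ab}\partial_a\partial_b K$, so the whole $A$-integrand in the chart is $(g^{ab}\partial_a\log(p\sqrt{|G|})+\partial_a g^{ab})\partial_b K+g^{ab}\partial_a\partial_b K$ --- precisely the bracket inside $\expect_q$ in Eqn.~(\ref{eqn:csvec}) before the outer $g'^{ij}\partial'_j$ is applied, and what the four-term expression in Eqn.~(\ref{eqn:genemb}) must reproduce. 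I would then expand the last three terms of Eqn.~(\ref{eqn:genemb}) in the chart: using $\partial_{x^i}\partial_{x^j}K=(\partial^2 y^\alpha/\partial x^i\partial x^j)\,\partial_{y^\alpha}\tilde K+M_{\alpha i}M_{\beta j}\,\partial_{y^\alpha}\partial_{y^\beta}\tilde K$, the term $\nabla\trs\nabla K-\tr(N\trs(\nabla\nabla\trs K)N)$ --- the $T\mf$-trace of the ambient Hessian, obtained as ambient Laplacian minus normal-direction Hessian --- equals $g^{ab}\partial_a\partial_b K-g^{ij}(\partial^2 y^\alpha/\partial x^i\partial x^j)\partial_{y^\alpha}K$, while $\big((M\trs\nabla)\trs(G^{-1}M\trs)\big)(\nabla K)$ expands to $\partial_a g^{ab}\partial_b K+g^{ij}(\partial^2 y^\alpha/\partial x^i\partial x^j)\partial_{y^\alpha}K$. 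The second-derivative-of-the-embedding terms (the second fundamental form contributions) cancel, leaving exactly $(g^{ab}\partial_a\log(p\sqrt{|G|})+\partial_a g^{ab})\partial_b K+g^{ab}\partial_a\partial_b K$, i.e.\ the chart integrand of $\hat f$; hence $\hat f'=\hat f\circ\Xi^{-1}$. I would remark that although the individual terms involve the chart-dependent quantities $M$, $G$, $\sqrt{|G|}$ and the choice of extension $\tilde K$, their sum does not, consistently with $\hat f$ being an intrinsic object. Finally, applying the translation lemma once more in the $B$-slot gives $\Xi_*\hat X=\Xi_*(\gradn\hat f)=(I_n-N'{N'}\trs)\nabla'\hat f'$, which is the claim. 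The whole argument is a single bookkeeping computation; its only delicate point is lining up the Christoffel-type term $\partial_a g^{ab}$ with the embedding's second-derivative terms so that the curvature contributions cancel.
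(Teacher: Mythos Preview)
Your proposal is correct and follows essentially the same route as the paper's derivation in Appendix~A6.1: both rely on the chain rule $\partial_{x^i}=M_{\alpha i}\partial_{y^\alpha}$, the isometry relation $G=M\trs M$, and the projector identity $MG^{-1}M\trs=I_n-NN\trs$ to pass between the chart expression of $\hat f$ and the embedded one, and both finish with $\Xi_*(\grad\hat f)=(I_n-NN\trs)\nabla\hat f$. The only difference is organizational: the paper proceeds forward from chart to embedded space, keeping $\partial_i(g^{ij}\partial_j K)=(M\trs\nabla)\trs(G^{-1}M\trs\nabla K)$ as a single unit and splitting it by the product rule, so the second-fundamental-form contributions $g^{ij}(\partial^2 y^\alpha/\partial x^i\partial x^j)\partial_{y^\alpha}K$ never appear separately; you instead expand each embedded term back in the chart and display the cancellation of those curvature terms explicitly.
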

See Appendix A6.1 for derivation.
Note that $N$ does not depend on the choice of c.s. of $\mf$ while $M$ and $G$ do, but the final result does not.
Simulating the dynamics is quite different from the coordinate space case due to the constraint of $\Xi(\mf)$.
A 1st-order approximation of the flow is
\begin{align*}
  y(t+\varepsilon) = \Exp_{y(t)}\big(\varepsilon \hat X(y(t)) \big),
\end{align*}
where $\Exp_A$ is the exponential map at $A\in\mf$, which maps $v\in T_A\mf$ to the end point of moving $A$ along the geodesic determined by $v$ for unit time (parameter of the geodesic).
In $\re^n$, $A$ is moved along the straight line in the direction of $v$ for length $\|v\|$, and in $\sph^{n-1}$, $A$ is moved along the great circle (orthodrome) tangent to $v$ at $A$ for length $\|v\|$.

\subsubsection{Instantiation for Hyperspheres}
We demonstrate the above result with the instance of $\mf=\sph^{n-1}$.
\begin{prop}
  \label{thm:sphemb}
  For $\sph^{n-1}$ isometrically embedded in $\re^n$ with orthonormal basis $\{y^{\alpha}\}_{\alpha=1}^n$, we have $\hat X' = (I_n - y'{y'}\trs)\nabla'\hat f'$,
  \begin{align}
	\hat f' = \expect_q \Big[& (\nabla\log p\big)\trs (\nabla K) + \nabla\trs\nabla K - y\trs\big( \nabla\nabla\trs K \big) y \nonumber\\
	& - (y\trs \nabla\log p + n - 1)y\trs \nabla K \Big].
	\label{eqn:sphemb}
  \end{align}
\end{prop}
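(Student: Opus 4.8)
The plan is to obtain Proposition~\ref{thm:sphemb} by specializing the general embedded-space formula of Proposition~\ref{thm:genemb} to $\mf=\sph^{n-1}\subset\re^n$ and then checking that the coordinate-system-dependent pieces of~(\ref{eqn:genemb}) collapse to the explicit terms of~(\ref{eqn:sphemb}). First I would read off the geometry: for $A=y\in\sph^{n-1}$ the pushed-forward tangent space $\Xi_*(T_A\mf)$ is the hyperplane $\{v\in\re^n:y\trs v=0\}$, so its orthogonal complement is $\mathrm{span}(y)$ and one may take $N(y)=y$ (an $n\times 1$ ``matrix''), giving $NN\trs=yy\trs$ and $I_n-NN\trs=I_n-yy\trs$. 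Substituting $N=y$ and $N'=y'$ into~(\ref{eqn:genemb}) immediately yields $\hat X'=(I_n-y'{y'}\trs)\nabla'\hat f'$, turns $\tr(N\trs(\nabla\nabla\trs K)N)$ into $y\trs(\nabla\nabla\trs K)y$, and leaves $\nabla\trs\nabla K$ untouched. For the first summand I would split $\nabla\log(p\sqrt{|G|})=\nabla\log p+\nabla\log\sqrt{|G|}$ and use $(I_n-yy\trs)(\nabla K)=\nabla K-(y\trs\nabla K)y$; the $\nabla\log p$ part then becomes exactly $(\nabla\log p)\trs\nabla K-(y\trs\nabla\log p)(y\trs\nabla K)$, which is well defined independently of how $p$ (equivalently $p_H$, since the embedding is isometric) is extended off the sphere, because only its tangential derivative enters.

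The central step is to show that the remaining pieces, $(\nabla\log\sqrt{|G|})\trs(I_n-yy\trs)(\nabla K)$ together with the last summand $\big((M\trs\nabla)\trs(G^{-1}M\trs)\big)(\nabla K)$ of~(\ref{eqn:genemb}), sum to $-(n-1)\,y\trs\nabla K$. I would first prove the coordinate-free identity, valid for any isometric embedding,
\begin{align*}
(\nabla\log\sqrt{|G|})\trs(I_n-NN\trs)(\nabla K)+\big((M\trs\nabla)\trs(G^{-1}M\trs)\big)(\nabla K)=(\Delta_{\mf}\Xi)\trs\nabla K ,
\end{align*}
where $\Delta_{\mf}\Xi:=(\Delta_{\mf}\Xi^1,\dots,\Delta_{\mf}\Xi^n)\trs$ is the mean-curvature vector of $\Xi(\mf)$; this follows by expanding each $\Delta_{\mf}\Xi^\alpha=\div(\grad\Xi^\alpha)$ in an arbitrary c.s. via $\div(X)=\partial_i(\sqrt{|G|}X^i)/\sqrt{|G|}$ and $\partial_i\Xi^\alpha=M_{\alpha i}$, using $G=M\trs M$ so that $MG^{-1}M\trs=I_n-NN\trs$ is the orthogonal projector onto the tangent space, and collecting the $\log\sqrt{|G|}$ and $\partial(G^{-1}M\trs)$ contributions. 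For $\sph^{n-1}$ each embedding coordinate $\Xi^\alpha=y^\alpha$ is a degree-one spherical harmonic, so $\Delta_{\sph^{n-1}}y^\alpha=-(n-1)y^\alpha$ (equivalently, decompose $\nabla\trs\nabla=\Delta_{\sph^{n-1}}+\partial_r^2+\frac{n-1}{r}\partial_r$ on $\re^n\setminus\{0\}$ with $\partial_r=y\trs\nabla$ at $r=1$ and apply it to the linear functions $y^\alpha$), hence $\Delta_{\sph^{n-1}}\Xi=-(n-1)y$ and the combination equals $-(n-1)\,y\trs\nabla K$. Adding the $-(y\trs\nabla\log p)(y\trs\nabla K)$ piece produces $-(y\trs\nabla\log p+n-1)\,y\trs\nabla K$, and together with $(\nabla\log p)\trs\nabla K$, $\nabla\trs\nabla K$ and $-y\trs(\nabla\nabla\trs K)y$ this is precisely~(\ref{eqn:sphemb}).

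I expect the mean-curvature identity above to be the main obstacle: it requires careful bookkeeping between the coordinate partials $\partial_i$ and the ambient partials $\partial_{y^\alpha}$, between the $n\times m$ matrix $M$ and the $m\times n$ matrix $G^{-1}M\trs$, and a clean verification that $MG^{-1}M\trs=I_n-NN\trs$; once it is in hand, the specialization to the sphere uses only the elementary spectral fact about degree-one spherical harmonics. A purely computational alternative --- taking the graph chart $x\mapsto(x,\sqrt{1-\|x\|_2^2})$ on a hemisphere, writing out $G$, $|G|$ and $M$ explicitly, and simplifying --- would also work but is messier and would still need patching across charts.
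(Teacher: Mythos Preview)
Your proposal is correct and takes a genuinely different route from the paper. The paper's proof (Appendix~A6.2) follows precisely the ``purely computational alternative'' you mention at the end: it fixes the graph chart $x\mapsto(x,\sqrt{1-x\trs x})$ on the upper hemisphere, writes out explicitly $M=\bigl(\begin{smallmatrix}I_{n-1}\\ -x\trs/\sqrt{1-x\trs x}\end{smallmatrix}\bigr)$, $G=I_{n-1}+\frac{xx\trs}{1-x\trs x}$, $G^{-1}=I_{n-1}-xx\trs$, $|G|=\frac{1}{1-x\trs x}$, and $N=y$, and then substitutes everything into Eqn.~(\ref{eqn:genemb}). Your approach instead isolates a coordinate-free identity --- that the chart-dependent residue in~(\ref{eqn:genemb}) equals the mean-curvature contribution $(\Delta_{\mf}\Xi)\trs\nabla K$ --- and then invokes the classical eigenvalue fact $\Delta_{\sph^{n-1}}y^{\alpha}=-(n-1)y^{\alpha}$. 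This is more conceptual, explains \emph{why} the constant $n-1$ appears, and generalizes immediately to any isometrically embedded manifold whose mean-curvature vector $\Delta_{\mf}\Xi$ is known; the paper's direct computation verifies the formula but hides this structure. One small correction: your worry that the chart approach ``would still need patching across charts'' is overstated --- since Proposition~\ref{thm:genemb} (equivalently Eqn.~(\ref{eqn:optsolution})) already guarantees $\hat f'$ is coordinate-invariant, computing in a single dense chart, as the paper does, suffices.
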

\noindent Note that the form of the expression does not depend on any c.s. of $\sph^{n-1}$.
The exponential map on $\sph^{n-1}$ is given by
\begin{align*}
  \Exp_y(v) = y\cos(\|v\|) + (v/\|v\|) \sin(\|v\|).
\end{align*}
See Appendix A6.2 for more details.
Appendix A6.3 further provides the expression for the product manifold of hyperspheres $(\sph^{n-1})^P$, which is the specific manifold on which the inference task of Spherical Admixture Model is defined.

\section{Experiments}
We find the na\"{i}ve mini-batch implementation of RSVGD does not perform well in experiments, which may require further investigation on the impact of gradient noise on the dynamics.
So we only focus on the full-batch performance of RSVGD in experiments\footnote{
  Codes and data available at \url{http://ml.cs.tsinghua.edu.cn/~changliu/rsvgd/}
}.

\subsection{Bayesian Logistic Regression}

We test the empirical performance of RSVGD on Euclidean space (Eqn.~\ref{eqn:csvec}) by the inference task of Bayesian logistic regression (BLR).
BLR generates latent variable from prior $w\sim\normal(0,\alpha I_m)$, and for each datum $x_d$, draws its label from Bernoulli distribution $y_d\sim \Bern(s(w\trs x_d))$, where $s(x)=1/(1+e^{-x})$.
The inference task is to estimate the posterior $p(w|\{x_d\}, \{y_d\})$.
Note that for this model, as is shown in Appendix A7, all the quantities involving Riemann metric tensor $G$ has a closed form, except $G^{-1}$, which can be solved by either direct numerical inversion, or an iterative method over the dataset.

\noindent\textbf{Kernel}
As stated before, in the Euclidean case we essentially conduct RSVGD on the distribution manifold.
To specify a kernel on it, we note that the Euclidean space of the latent variable is a global c.s. $(\Phi, \re^m)$ of the manifold.
By the mapping $\Phi$, any kernel on $\re^m$ is a kernel on the manifold (\citet{steinwart2008support}, Lemma~4.3).
In this sense we choose the Gaussian kernel on $\re^m$. 
Furthermore, we implement the kernel for both SVGD and RSVGD as the summation of several Gaussian kernels with different bandwidths, which is still a valid kernel (\citet{steinwart2008support}, Lemma~4.5).
We find it slightly better than the median trick of SVGD in our experiments.

\noindent\textbf{Setups}
We compare RSVGD with SVGD (full-batch) for test accuracy along iteration. 
We fix $\alpha=0.01$ and use 100 particles for both methods.
We use the Splice19 dataset (1,000 training entries, 60 features), one of the benchmark datasets compiled by \citet{mika1999fisher}, and the Covertype dataset (581,012 entries, 54 features) also used by \citet{liu2016stein}.
RSVGD updates particles by the aforementioned 1st-order flow approximation, which is effectively the vanilla gradient descent, while SVGD uses the recommended AdaGrad with momentum.
We also tried gradient descent for SVGD, with no better outcomes.

\begin{figure}
  \centering
    \subfigure[On Splice19 dataset]{\includegraphics[width=0.22\textwidth]{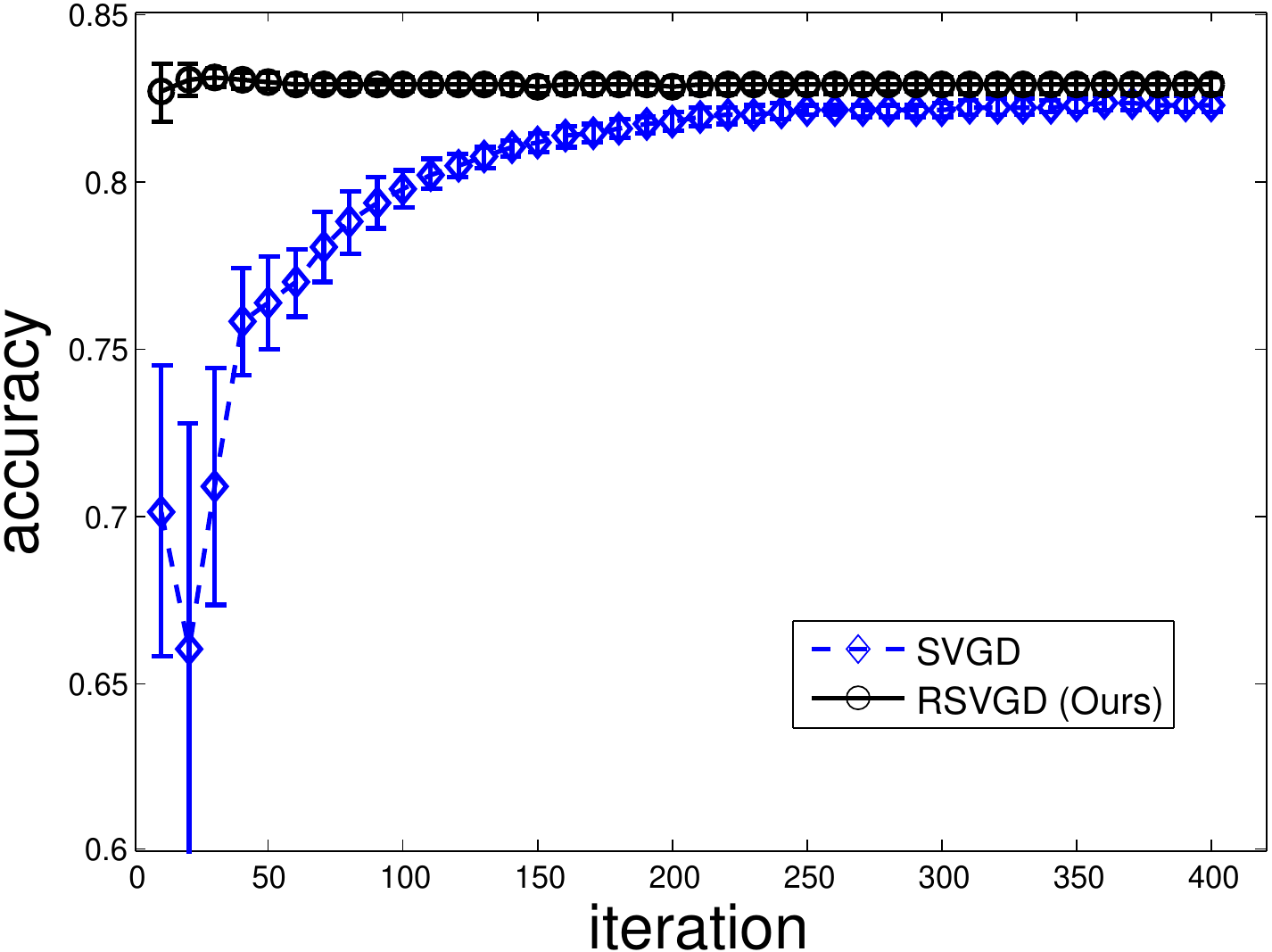} \label{fig:blriter-splice}}
    \subfigure[On Covertype dataset]{\includegraphics[width=0.22\textwidth]{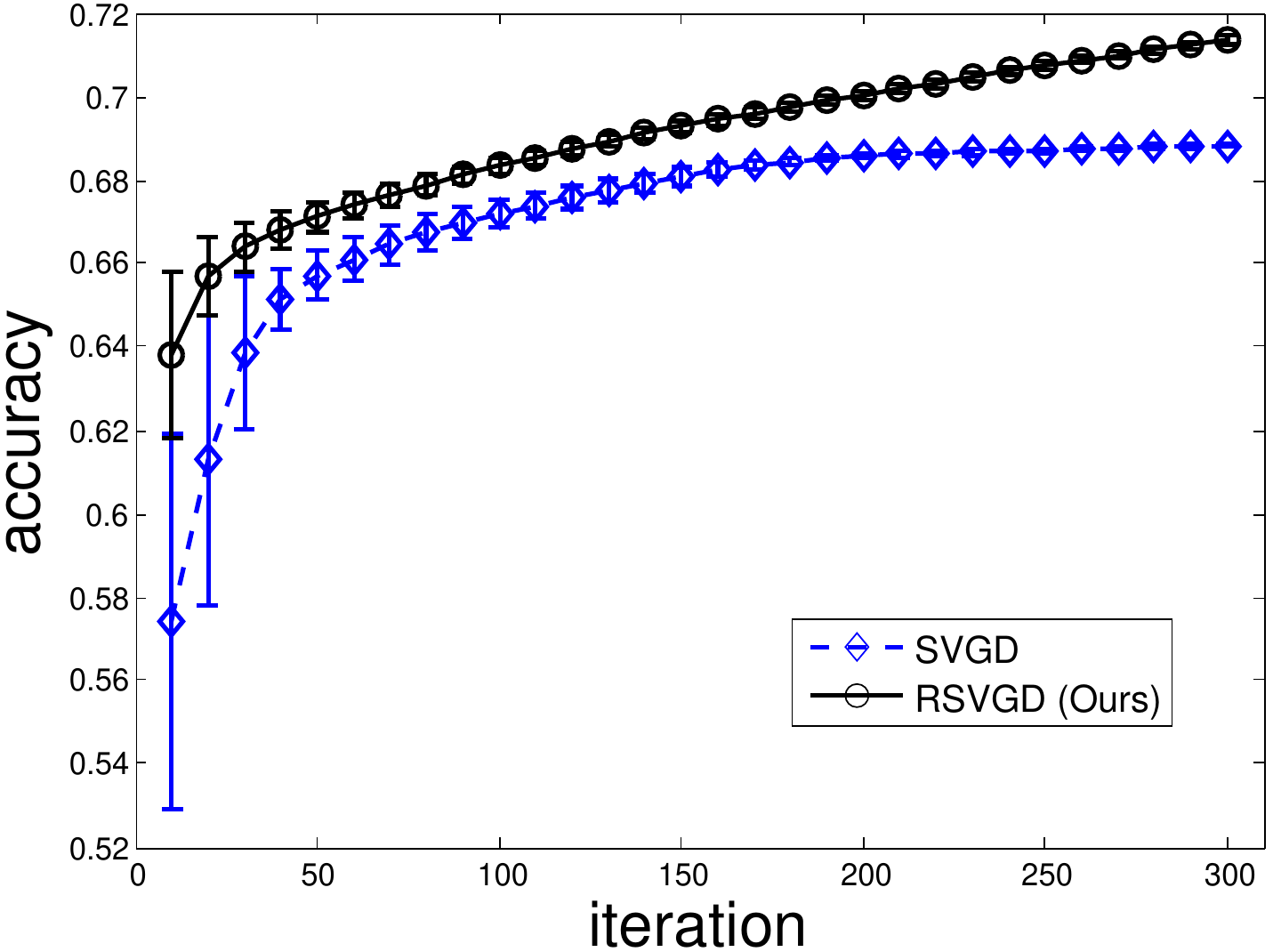} \label{fig:blriter-covtype}}
	\caption{Test accuracy along iteration for BLR.
	  Both methods are run 20 times on Splice19 and 10 times on Covertype.
	  Each run on Covertype uses a random train(80\%)-test(20\%) split as in~\cite{liu2016stein}.
	}
	\label{fig:blriter}
\end{figure}

\noindent\textbf{Results}
We see from Fig. \ref{fig:blriter} that RSVGD makes more effective updates than SVGD on both datasets, indicating the benefit of RSVGD to utilize the distribution geometry for Bayesian inference.
Although AdaGrad with momentum, which SVGD uses, counts for a method for estimating the geometry empirically \cite{duchi2011adaptive}, our method provides a more principled solution, with more precise results.

\subsection{Spherical Admixture Model}

We investigate the advantages of RSVGD on Riemann manifold (Eqn.~\ref{eqn:genemb}) by the inference task of Spherical Admixture Model (SAM) \cite{reisinger2010spherical}, which is a topic model for data on hyperspheres, e.g. \textit{tf-idf} feature of documents.
The model first generates corpus mean $\mu\sim\vmf(m,\kappa_0)$ and topics $\{\beta_k\}_{k=1}^P: \beta_k \sim\vmf(\mu,\sigma)$, then for document $d$, generates its topic proportion $\theta_d \sim\dir(\alpha)$ and content $v_d\sim\vmf(\beta\theta_d/\|\beta\theta_d\|, \kappa)$, where $\vmf$ is the von Mises-Fisher distribution \cite{mardia2000distributions} for random variable on hyperspheres, and $\dir$ is the Dirichlet distribution.

The inference task of SAM is to estimate the posterior of the topics $p(\beta|v)$.
Note that the topics $\beta=(\beta_1,\dots,\beta_P)$ lies in the product manifold of hyperspheres $(\sph^{n-1})^P$.

\noindent\textbf{Kernel}
Like the Gaussian kernel in the Euclidean case, we use the vMF kernel $K(y,y') = \exp(\kappa y\trs y')$ on hyperspheres.
Note that the vMF kernel on $\sph^{n-1}$ is the restriction of the Gaussian kernel on $\re^n$: $\exp(-\frac{\kappa}{2}\|y-y'\|^2) = \exp(-\kappa)\exp(\kappa y\trs\! y')$ for $y,y'\in\sph^{n-1}$, so it is a valid kernel on $\sph^{n-1}$ (\citet{steinwart2008support}, Lemma~4.3).
We also recognized that the arcsine kernel $K(y,y') = \arcsin(y\trs y')$ is also a kernel on $\sph^{n-1}$, due to the non-negative Taylor coefficients of the arcsine function (\citet{steinwart2008support}, Lemma~4.8).
But unfortunately it does not work well in experiments.

For a kernel on $(\sph^{n-1})^P$, we set $K(y,y') = \prod_{k=1}^P \exp( \kappa y_{(k)}\trs y_{(k)}' )$.
Again we use the summed kernel trick for RSVGD.

\noindent\textbf{Setups}
The manifold constraint isolates the task from most prevalent inference methods, including SVGD.
A mean-field variational inference method (VI) is proposed by the original work of SAM.
\citet{liu2016stochastic} present more methods based on advanced Markov chain Monte Carlo methods (MCMCs) on manifold, including Geodesic Monte Carlo (GMC) \cite{byrne2013geodesic}, and their proposed Stochastic Gradient Geodesic Monte Carlo (SGGMC), a scalable mini-batch method. 
We implement the two MCMCs in both the standard sequential (-seq) way, and the parallel (-par) way: run multiple chains and collect the last sample on each chain.
To apply RSVGD for inference, as with GMC and SGGMC cases, we adopt the framework of \citet{liu2016stochastic}, which directly provides an estimate of the all-we-need information $\nabla_{\beta}\log p(\beta|v)$. 
We run all the methods on the 20News-different dataset (1,666 training entries, 5,000 features) with default hyperparameters as the same as \cite{liu2016stochastic}.
We use epoch (the amount of visit to the dataset) instead of iteration since SGGMCb is run with mini-batch.

\begin{figure}
  \centering
    \subfigure[Results with 100 particles]{\includegraphics[width=0.22\textwidth]{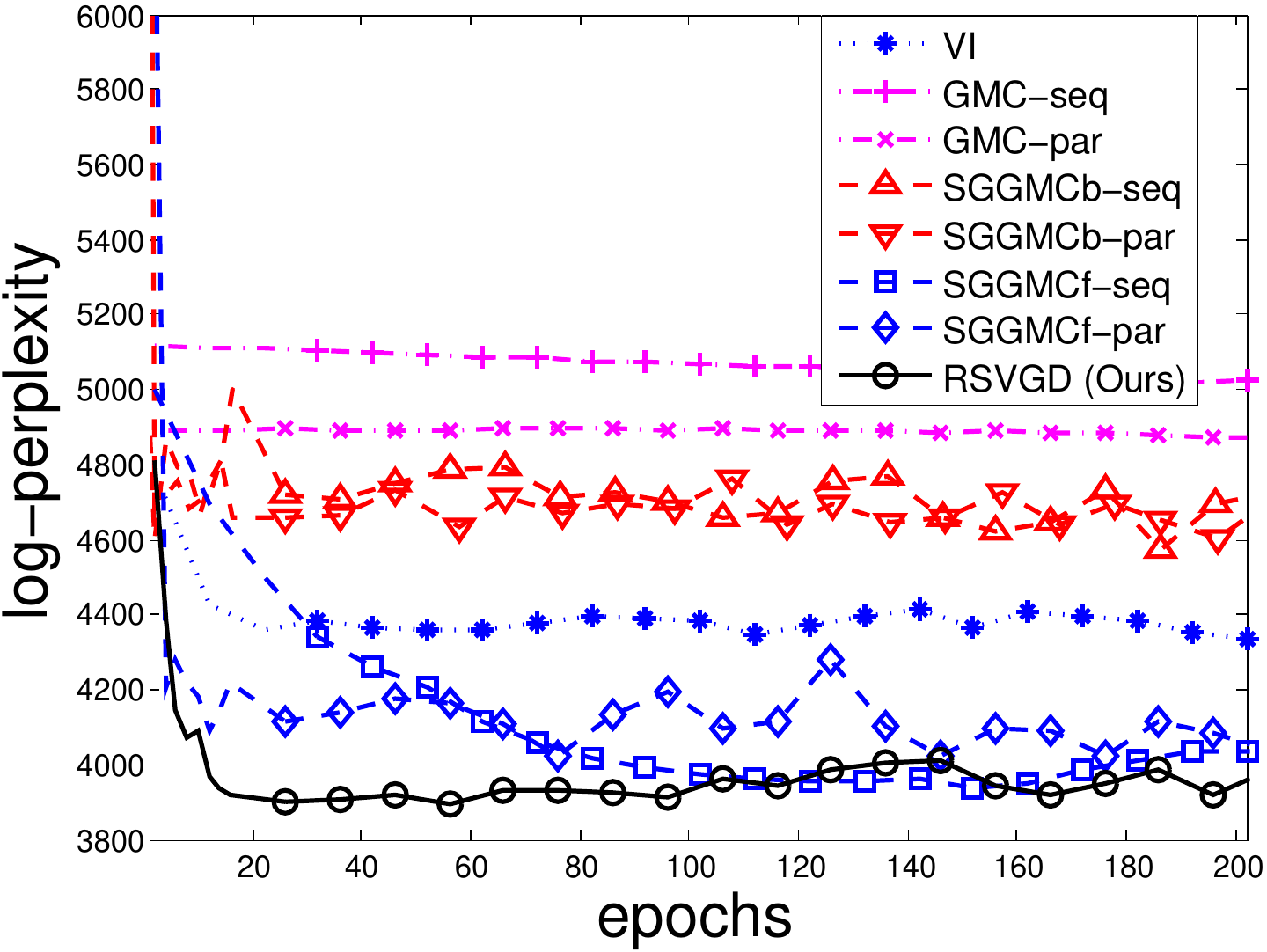} \label{fig:samperp-epoch}}
    \subfigure[Results at 200 epochs]{\includegraphics[width=0.22\textwidth]{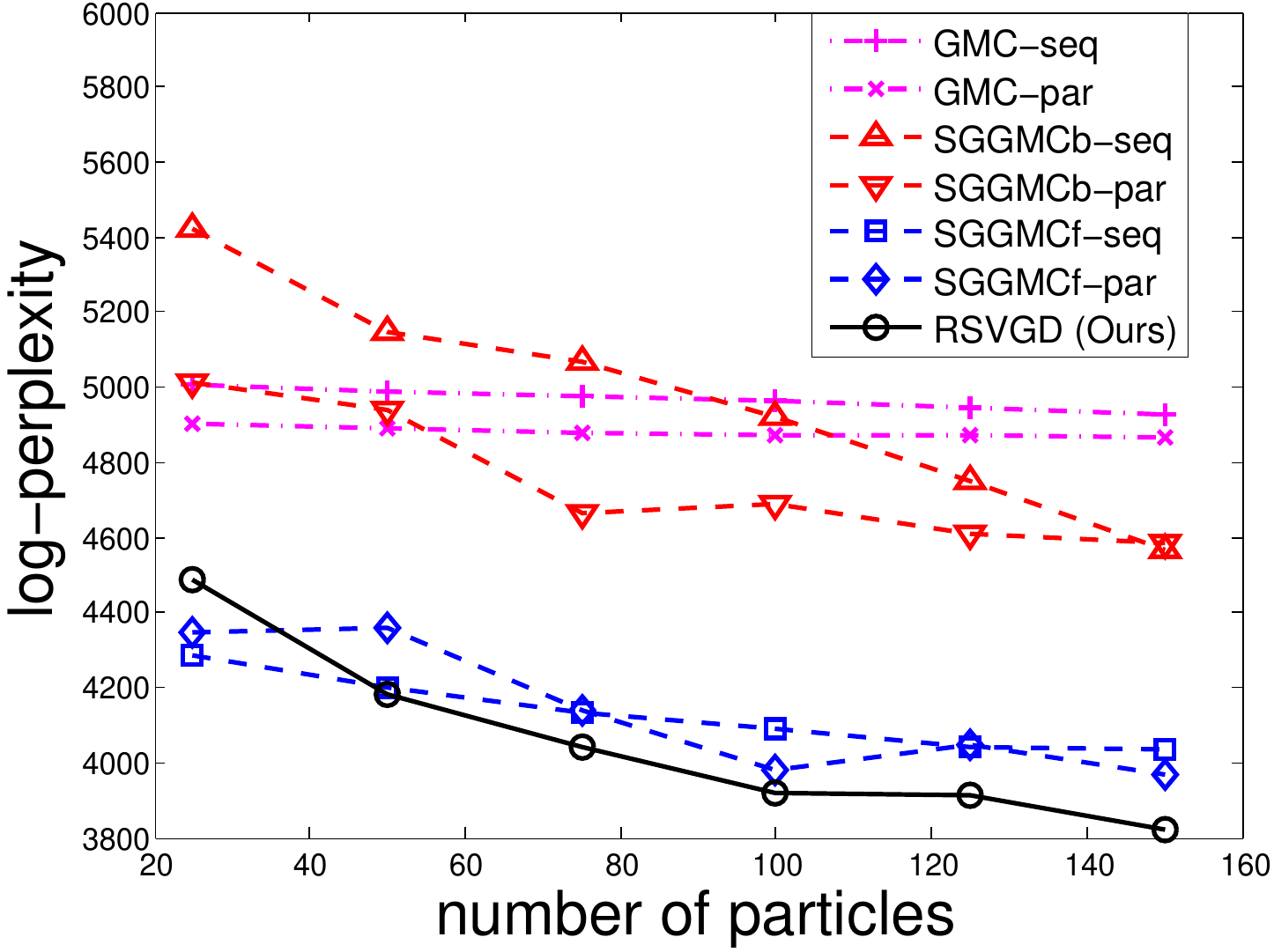} \label{fig:samperp-nump}}
	\caption{Results on the SAM inference task on 20News-different dataset, in log-perplexity.
	  We run SGGMCf for full batch and SGGMCb for a mini-batch size of 50.
	}
  \label{fig:samperp}
\end{figure}

\noindent\textbf{Results}
Fig.~\ref{fig:samperp-epoch} shows that RSVGD makes the most effective progress along epoch, indicating its iteration-effectiveness.
VI converges fast but to a less satisfying state due to its restrictive variational assumption, while RSVGD, by the advantage of high approximation flexibility, achieves a better result comparable to MCMC results.
GMC methods make consistent but limited progress, where 200 epochs is too short for them.
Although SGGMC methods perform outstandingly in \cite{liu2016stochastic}, they are embarrassed here by limited particle size. 
Both fed on full-batch gradient, SGGMCf has a more active dynamics than GMC thus can explore a broader region, but still not as efficient as RSVGD.
The -seq and -par versions of an MCMC behave similarly, although -par seems better at early stage since it has more particles.

Fig.~\ref{fig:samperp-nump} presents the results over various numbers of particles, where we find the particle-efficiency of RSVGD.
For fewer particles, SGGMCf methods have the chance to converge well in 200 epochs thus can be better than RSVGD.
For more particles MCMCs make less salient progress since the positive autocorrelation limits the representativeness of limited particles.

\section{Conclusion}

We develop Riemannian Stein Variational Gradient Descent (RSVGD), an extension of SVGD \cite{liu2016stein} to Riemann manifold.
We generalize the idea of SVGD and derive the directional derivative on Riemann manifold.
To solve for a valid and close-formed functional gradient, we first analyze the requirements by Riemann manifold and illustrate the failure of SVGD techniques in our case, then propose our solution and validate it.
Experiments show the benefit of utilizing distribution geometry on inference tasks on Euclidean space, and the advantages of particle-efficiency, iteration-effectiveness and approximation flexibility on Riemann manifold.

Possible future directions include exploiting Riemannian Kernelized Stein Discrepancy which would be more appropriate with a properly chosen manifold.
Mini-batch version of RSVGD for scalability is also an interesting direction, since the na\"{i}ve implementation does not work well as mentioned.
Applying RSVGD to a broader stage is also promising, including Euclidean space tasks like deep generative models and Bayesian neural networks (with Riemann metric tensor estimated in the way of \cite{li2016preconditioned}), and Riemann manifold tasks like Bayesian matrix completion \cite{song2016bayesian} on Stiefel manifold.

\subsubsection*{Acknowledgements}
This work is supported by the National NSF of China (Nos. 61620106010, 61621136008, 61332007) and Tiangong Institute for Intelligent Computing. 
We thank Jingwei Zhuo and Yong Ren for inspiring discussions.

\bibliographystyle{aaai}
\bibliography{Liu-Zhu}

\begin{thebibliography}{}

\bibitem[\protect\citeauthoryear{Abraham, Marsden, and
  Ratiu}{2012}]{abraham2012manifolds}
Abraham, R.; Marsden, J.~E.; and Ratiu, T.
\newblock 2012.
\newblock {\em Manifolds, tensor analysis, and applications}, volume~75.
\newblock Springer Science \& Business Media.

\bibitem[\protect\citeauthoryear{Amari and Nagaoka}{2007}]{amari2007methods}
Amari, S.-I., and Nagaoka, H.
\newblock 2007.
\newblock {\em Methods of information geometry}, volume 191.
\newblock American Mathematical Soc.

\bibitem[\protect\citeauthoryear{Amari}{2016}]{amari2016information}
Amari, S.-I.
\newblock 2016.
\newblock {\em Information geometry and its applications}.
\newblock Springer.

\bibitem[\protect\citeauthoryear{Bonnabel}{2013}]{bonnabel2013stochastic}
Bonnabel, S.
\newblock 2013.
\newblock Stochastic gradient descent on riemannian manifolds.
\newblock {\em IEEE Transactions on Automatic Control} 58(9):2217--2229.

\bibitem[\protect\citeauthoryear{Brubaker, Salzmann, and
  Urtasun}{2012}]{brubaker2012family}
Brubaker, M.~A.; Salzmann, M.; and Urtasun, R.
\newblock 2012.
\newblock A family of mcmc methods on implicitly defined manifolds.
\newblock In {\em Proceedings of the 15th International Conference on
  Artificial Intelligence and Statistics (AISTATS)},  161--172.

\bibitem[\protect\citeauthoryear{Byrne and Girolami}{2013}]{byrne2013geodesic}
Byrne, S., and Girolami, M.
\newblock 2013.
\newblock Geodesic monte carlo on embedded manifolds.
\newblock {\em Scandinavian Journal of Statistics} 40(4):825--845.

\bibitem[\protect\citeauthoryear{Chwialkowski, Strathmann, and
  Gretton}{2016}]{chwialkowski2016kernel}
Chwialkowski, K.; Strathmann, H.; and Gretton, A.
\newblock 2016.
\newblock A kernel test of goodness of fit.
\newblock In {\em International Conference on Machine Learning},  2606--2615.

\bibitem[\protect\citeauthoryear{Dai \bgroup et al\mbox.\egroup
  }{2016}]{dai2016provable}
Dai, B.; He, N.; Dai, H.; and Song, L.
\newblock 2016.
\newblock Provable bayesian inference via particle mirror descent.
\newblock In {\em Artificial Intelligence and Statistics},  985--994.

\bibitem[\protect\citeauthoryear{Do~Carmo}{1992}]{do1992riemannian}
Do~Carmo, M.~P.
\newblock 1992.
\newblock {\em Riemannian Geometry}.

\bibitem[\protect\citeauthoryear{Duchi, Hazan, and
  Singer}{2011}]{duchi2011adaptive}
Duchi, J.; Hazan, E.; and Singer, Y.
\newblock 2011.
\newblock Adaptive subgradient methods for online learning and stochastic
  optimization.
\newblock {\em Journal of Machine Learning Research} 12(Jul):2121--2159.

\bibitem[\protect\citeauthoryear{Frankel}{2011}]{frankel2011geometry}
Frankel, T.
\newblock 2011.
\newblock {\em The geometry of physics: an introduction}.
\newblock Cambridge University Press.

\bibitem[\protect\citeauthoryear{Gemici, Rezende, and
  Mohamed}{2016}]{gemici2016normalizing}
Gemici, M.~C.; Rezende, D.; and Mohamed, S.
\newblock 2016.
\newblock Normalizing flows on riemannian manifolds.
\newblock {\em arXiv preprint arXiv:1611.02304}.

\bibitem[\protect\citeauthoryear{Girolami and
  Calderhead}{2011}]{girolami2011riemann}
Girolami, M., and Calderhead, B.
\newblock 2011.
\newblock Riemann manifold langevin and hamiltonian monte carlo methods.
\newblock {\em Journal of the Royal Statistical Society: Series B (Statistical
  Methodology)} 73(2):123--214.

\bibitem[\protect\citeauthoryear{Haarnoja \bgroup et al\mbox.\egroup
  }{2017}]{haarnoja2017reinforcement}
Haarnoja, T.; Tang, H.; Abbeel, P.; and Levine, S.
\newblock 2017.
\newblock Reinforcement learning with deep energy-based policies.
\newblock {\em arXiv preprint arXiv:1702.08165}.

\bibitem[\protect\citeauthoryear{Hoffman \bgroup et al\mbox.\egroup
  }{2013}]{hoffman2013stochastic}
Hoffman, M.~D.; Blei, D.~M.; Wang, C.; and Paisley, J.
\newblock 2013.
\newblock Stochastic variational inference.
\newblock {\em The Journal of Machine Learning Research} 14(1):1303--1347.

\bibitem[\protect\citeauthoryear{James}{1976}]{james1976topology}
James, I.~M.
\newblock 1976.
\newblock {\em The topology of Stiefel manifolds}, volume~24.
\newblock Cambridge University Press.

\bibitem[\protect\citeauthoryear{Li \bgroup et al\mbox.\egroup
  }{2016}]{li2016preconditioned}
Li, C.; Chen, C.; Carlson, D.~E.; and Carin, L.
\newblock 2016.
\newblock Preconditioned stochastic gradient langevin dynamics for deep neural
  networks.
\newblock In {\em AAAI}, volume~2, ~4.

\bibitem[\protect\citeauthoryear{Liu and Wang}{2016}]{liu2016stein}
Liu, Q., and Wang, D.
\newblock 2016.
\newblock Stein variational gradient descent: A general purpose bayesian
  inference algorithm.
\newblock In {\em Advances in Neural Information Processing Systems},
  2370--2378.

\bibitem[\protect\citeauthoryear{Liu \bgroup et al\mbox.\egroup
  }{2017}]{liu2017stein}
Liu, Y.; Ramachandran, P.; Liu, Q.; and Peng, J.
\newblock 2017.
\newblock Stein variational policy gradient.
\newblock {\em arXiv preprint arXiv:1704.02399}.

\bibitem[\protect\citeauthoryear{Liu, Lee, and
  Jordan}{2016}]{liu2016kernelized}
Liu, Q.; Lee, J.~D.; and Jordan, M.~I.
\newblock 2016.
\newblock A kernelized stein discrepancy for goodness-of-fit tests.
\newblock In {\em Proceedings of the International Conference on Machine
  Learning (ICML)}.

\bibitem[\protect\citeauthoryear{Liu, Zhu, and Song}{2016}]{liu2016stochastic}
Liu, C.; Zhu, J.; and Song, Y.
\newblock 2016.
\newblock Stochastic gradient geodesic mcmc methods.
\newblock In {\em Advances In Neural Information Processing Systems},
  3009--3017.

\bibitem[\protect\citeauthoryear{Ma, Chen, and Fox}{2015}]{ma2015complete}
Ma, Y.-A.; Chen, T.; and Fox, E.
\newblock 2015.
\newblock A complete recipe for stochastic gradient mcmc.
\newblock In {\em Advances in Neural Information Processing Systems},
  2917--2925.

\bibitem[\protect\citeauthoryear{Mardia and
  Jupp}{2000}]{mardia2000distributions}
Mardia, K.~V., and Jupp, P.~E.
\newblock 2000.
\newblock Distributions on spheres.
\newblock {\em Directional Statistics}  159--192.

\bibitem[\protect\citeauthoryear{Micchelli and
  Pontil}{2005}]{micchelli2005learning}
Micchelli, C.~A., and Pontil, M.
\newblock 2005.
\newblock On learning vector-valued functions.
\newblock {\em Neural computation} 17(1):177--204.

\bibitem[\protect\citeauthoryear{Mika \bgroup et al\mbox.\egroup
  }{1999}]{mika1999fisher}
Mika, S.; Ratsch, G.; Weston, J.; Scholkopf, B.; and Mullers, K.-R.
\newblock 1999.
\newblock Fisher discriminant analysis with kernels.
\newblock In {\em Neural Networks for Signal Processing IX, 1999. Proceedings
  of the 1999 IEEE Signal Processing Society Workshop.},  41--48.
\newblock IEEE.

\bibitem[\protect\citeauthoryear{Pu \bgroup et al\mbox.\egroup
  }{2017}]{pu2017stein}
Pu, Y.; Gan, Z.; Henao, R.; Li, C.; Han, S.; and Carin, L.
\newblock 2017.
\newblock Stein variational autoencoder.
\newblock {\em arXiv preprint arXiv:1704.05155}.

\bibitem[\protect\citeauthoryear{Reisinger \bgroup et al\mbox.\egroup
  }{2010}]{reisinger2010spherical}
Reisinger, J.; Waters, A.; Silverthorn, B.; and Mooney, R.~J.
\newblock 2010.
\newblock Spherical topic models.
\newblock In {\em Proceedings of the 27th International Conference on Machine
  Learning (ICML-10)},  903--910.

\bibitem[\protect\citeauthoryear{Rezende and
  Mohamed}{2015}]{rezende2015variational}
Rezende, D., and Mohamed, S.
\newblock 2015.
\newblock Variational inference with normalizing flows.
\newblock In {\em Proceedings of The 32nd International Conference on Machine
  Learning},  1530--1538.

\bibitem[\protect\citeauthoryear{Romano}{2007}]{romano2007continuum}
Romano, G.
\newblock 2007.
\newblock Continuum mechanics on manifolds.
\newblock {\em Lecture notes University of Naples Federico II, Naples, Italy}
  1--695.

\bibitem[\protect\citeauthoryear{Sherman and
  Morrison}{1950}]{sherman1950adjustment}
Sherman, J., and Morrison, W.~J.
\newblock 1950.
\newblock Adjustment of an inverse matrix corresponding to a change in one
  element of a given matrix.
\newblock {\em Annals of Mathematical Statistics} 21(1):124--127.

\bibitem[\protect\citeauthoryear{Song and Zhu}{2016}]{song2016bayesian}
Song, Y., and Zhu, J.
\newblock 2016.
\newblock Bayesian matrix completion via adaptive relaxed spectral
  regularization.
\newblock In {\em The 30th AAAI Conference on Artificial Intelligence
  (AAAI-16)}.

\bibitem[\protect\citeauthoryear{Steinwart and
  Christmann}{2008}]{steinwart2008support}
Steinwart, I., and Christmann, A.
\newblock 2008.
\newblock {\em Support vector machines}.
\newblock Springer Science \& Business Media.

\bibitem[\protect\citeauthoryear{Wang and Liu}{2016}]{wang2016learning}
Wang, D., and Liu, Q.
\newblock 2016.
\newblock Learning to draw samples: With application to amortized mle for
  generative adversarial learning.
\newblock {\em arXiv preprint arXiv:1611.01722}.

\bibitem[\protect\citeauthoryear{Zhang, Reddi, and
  Sra}{2016}]{zhang2016riemannian}
Zhang, H.; Reddi, S.~J.; and Sra, S.
\newblock 2016.
\newblock Riemannian svrg: Fast stochastic optimization on riemannian
  manifolds.
\newblock In {\em Advances in Neural Information Processing Systems},
  4592--4600.

\bibitem[\protect\citeauthoryear{Zhou}{2008}]{zhou2008derivative}
Zhou, D.-X.
\newblock 2008.
\newblock Derivative reproducing properties for kernel methods in learning
  theory.
\newblock {\em Journal of computational and Applied Mathematics}
  220(1):456--463.

\end{thebibliography}

\newpage
\section*{Appendix}

\subsection*{A1. Proof of Lemma~1 (Continuity Equation on Riemann Manifold)}
Let $F_{(\cdot)}(\cdot)$ be the flow of $X$.
$\forall U\subset\mf$ compact, consider the integral $\int_{F_t(U)}p_t \mu_g$.
Since a particle in $U$ at time $0$ will always in $F_t(U)$ at time $t$ and vice versa, the integral, i.e. the portion of particles in $F_t(U)$ at time $t$, is equal to the portion of particles in $U$ at time $0$ for any time $t$.
So it is a constant.
Reynolds transport theorem gives
\begin{align*}
  0 = \frac{\ud}{\ud t} \int_{F_t(U)} p_t \mu_g = \int_{F_t(U)} \left( \frac{\partial p_t}{\partial t} + \div(p_t X) \right) \mu_g
\end{align*}
for any $U$ and $t$, so the integrand must be zero and we derived the conclusion.

\subsection*{A2. Well-definedness of KL-divergence on Riemann Manifold}
We define the KL-divergence between two distributions on $\mf$ by their \pdf $q^{\mu}$ and $p^{\mu}$ \wrt volume form $\mu$ as:
\begin{align*}
  \kl(q||p) \defas \int_{\mf} q^{\mu} \log(q^{\mu}/p^{\mu}) \mu.
\end{align*}
To make this notion well-defined, we need to show that the right hand side of the definition is invariant of $\mu$.
Let $\omega$ be another volume form.
Since $\forall A\in\mf$, $\mu(A)$ and $\omega(A)$ lie on the same 1-dimensional linear space (the space of $m$-forms at $A$), we have $\alpha(A)\in\re^+$ s.t. $\omega(A) = \alpha(A)\mu(A)$.
Such a construction gives a smooth function $\alpha:\mf\to\re^+$.
By the definition of p.d.f., $q^{\omega} = q^{\mu}/\alpha$.
So $\int_{\mf} q^{\omega} \log(q^{\omega}/p^{\omega}) \omega = \int_{\mf} q^{\mu} \log(q^{\mu}/p^{\mu}) \mu$, which indicates that the integral is independent of the chosen volume form.

\subsection*{A3. Proof of Theorem~2}
To formally prove Theorem~2, we first deduce a lemma, which gives the \pdf of the distribution transformed by a diffeomorphism on $\mf$ (an invertible smooth transformation on $\mf$).
\begin{lem}[Transformed p.d.f.]
  \label{thm:transpdf}
  Let $\phi$ be an orientation-preserving diffeomorphism on $\mf$, and $p$ the \pdf of a distribution on $\mf$. Denote $p_{[\phi]}$ as the \pdf of the distribution of the $\phi$-transformed random variable from the one obeying $p$, i.e. the transformed p.d.f. Then in any local coordinate system (c.s.) $(U,\Phi)$,
  \begin{align}
	p_{[\phi]} = \frac{\big(p\sqrt{|G|}\big)\circ\phi^{-1}}{\sqrt{|G|}} \left| \jac\phi^{-1} \right|,
	\label{eqn:transpdf}
  \end{align}
  where $G$ is the Riemann metric tensor in $(U,\Phi)$ and $|G|$ is its determinant, and $\jac\phi^{-1}$ is the Jacobian determinant of $\Phi\circ\phi^{-1}\circ\Phi^{-1}:\re^m\to\re^m$.
  The right hand side is coordinate invariant.
\end{lem}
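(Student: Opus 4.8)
The plan is to read the formula off from the defining property of the transformed density together with the classical change-of-variables formula, after writing the Riemann volume form $\mu_g$ in the chart $(U,\Phi)$ as a weighted Lebesgue measure. First I would record the defining property: since $\phi$ pushes the law $p$ forward to $p_{[\phi]}$, for every measurable $V\subset\mf$,
\begin{align*}
  \int_V p_{[\phi]}\,\mu_g \;=\; \prob_{A\sim p}\!\big(\phi(A)\in V\big) \;=\; \prob_{A\sim p}\!\big(A\in\phi^{-1}(V)\big) \;=\; \int_{\phi^{-1}(V)} p\,\mu_g ,
\end{align*}
i.e. $p_{[\phi]}\mu_g = \phi_\ast(p\mu_g)$. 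I would verify the claimed pointwise identity at points $A\in U\cap\phi(U)$ — precisely the locus where the right-hand side of~(\ref{eqn:transpdf}), computed in $(U,\Phi)$, is meaningful, since there $\phi^{-1}(A)\in U$ as well. For points $A$ with $\phi^{-1}(A)$ near $A$ (in particular for the flow maps $F_t$ with small $t$, the only case used later) this is automatic, and since $p_{[\phi]}$ is an intrinsic object, testing against such $V$ suffices.

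\textbf{Main computation.} Fixing $V\subset U\cap\phi(U)$ forces $\phi^{-1}(V)\subset U$, so both integrals above live in the single chart $(U,\Phi)$, where $\mu_g = \sqrt{|G|}\,\ud x^1\wedge\cdots\wedge\ud x^m$ and hence
\begin{align*}
  \int_V p_{[\phi]}\mu_g = \int_{\Phi(V)}\!\big(p_{[\phi]}\sqrt{|G|}\big)\!\circ\!\Phi^{-1}\,\ud x,
  \qquad
  \int_{\phi^{-1}(V)} p\,\mu_g = \int_{\Phi(\phi^{-1}(V))}\!\big(p\sqrt{|G|}\big)\!\circ\!\Phi^{-1}\,\ud x .
\end{align*}
In the second integral I would substitute $x=\tilde\psi(z)$ with $\tilde\psi\defas\Phi\circ\phi^{-1}\circ\Phi^{-1}$; then $z$ sweeps $\Phi(V)$, $\ud x=\big|{\jac\phi^{-1}}(z)\big|\,\ud z$, and $\Phi^{-1}\circ\tilde\psi=\phi^{-1}\circ\Phi^{-1}$ turns the integrand into $\big(p\sqrt{|G|}\big)\!\circ\!\phi^{-1}\!\circ\!\Phi^{-1}(z)\cdot\big|{\jac\phi^{-1}}(z)\big|$. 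Equating the two displays for all such $V$, and noting the right-hand side of~(\ref{eqn:transpdf}) is continuous (indeed smooth), forces the pointwise equality, at $A=\Phi^{-1}(z)$,
\begin{align*}
  p_{[\phi]}(A)\,\sqrt{|G|}(A) \;=\; \big(p\sqrt{|G|}\big)\!\big(\phi^{-1}(A)\big)\cdot\big|{\jac\phi^{-1}}\big| ,
\end{align*}
which is exactly~(\ref{eqn:transpdf}). The orientation-preserving hypothesis is used only to keep $\mu_g$ a genuine volume form throughout; the computation itself involves only $\big|{\jac\phi^{-1}}\big|$.

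\textbf{Coordinate invariance, and the main obstacle.} The invariance claim needs no extra argument: $p_{[\phi]}$ was defined as the density of the intrinsic measure $\phi_\ast(p\mu_g)$ with respect to the intrinsic $\mu_g$, so it is one fixed function on $\mf$; therefore the right-hand side of~(\ref{eqn:transpdf}), being equal to $p_{[\phi]}$ whenever evaluated in a chart, denotes the same object in every chart — i.e. it is coordinate invariant. (If desired, this can also be checked by hand: a change of chart multiplies $\sqrt{|G|}$ and ${\jac\phi^{-1}}$ by reciprocal Jacobian-determinant factors that cancel.) I expect no analytic difficulty — the change of variables is routine — so the only genuine care is bookkeeping: keeping straight that $p\sqrt{|G|}$ is the \emph{coordinate} density (not the intrinsic $p$), that the two occurrences of $\sqrt{|G|}$ sit at the different points $\phi^{-1}(A)$ and $A$, and the mild domain issue that $\phi^{-1}$ may carry points out of $U$, which is handled by restricting to $U\cap\phi(U)$ and invoking the intrinsic nature of $p_{[\phi]}$.
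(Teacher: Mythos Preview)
Your proposal is correct and follows essentially the same approach as the paper: both start from the defining identity $\int_V p_{[\phi]}\mu_g=\int_{\phi^{-1}(V)}p\,\mu_g$ and pass to coordinates via $\mu_g=\sqrt{|G|}\,\ud x^1\wedge\cdots\wedge\ud x^m$, then apply a change of variables to identify the integrands. The only cosmetic difference is that the paper phrases the change of variables intrinsically as the pullback ${(\phi^{-1})}^*(p\mu_g)$ before expanding in a chart, whereas you push both sides into the chart first and invoke the Euclidean change-of-variables formula for $\tilde\psi=\Phi\circ\phi^{-1}\circ\Phi^{-1}$; these are the same computation, and your extra care about the domain $U\cap\phi(U)$ is a welcome clarification the paper leaves implicit.
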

\begin{proof}
  Let $U$ be a compact subset of $\mf$, and $(V,\Phi), V\subset U$ be a local c.s. of $U$ with coordinate chart $\{x^i\}_{i=1}^m$.
  On one hand, due to the definition of $p_{[\phi]}$, we have $\prob_{p}(U) = \prob_{p_{[\phi]}}(\phi(U))$.
  On the other hand, we can invoke the theorem of global change of variables on manifold (\citep{abraham2012manifolds}, Theorem~8.1.7), which gives $\prob_{p}(U) = $
  \begin{align}
	\label{eqn:changeofvar1} &\int_U p\mu_g \!=\!\! \int_{\!\phi(U)} \!\!{\phi^{-1}}^* \!(p\mu_g) =\!\! \int_{\!\phi(U)} \!(p\circ\!\phi^{-1}) {\phi^{-1}}^*\!(\mu_g) \\
	\notag =& \int_{\phi(U)} (p\circ\phi^{-1}) (\sqrt{|G|}\circ\phi^{-1}) |\jac \phi^{-1}| \ud x^1\wedge\dots\wedge\ud x^m \\
	\label{eqn:changeofvar2} =& \int_{\phi(U)} \frac{\big(p\sqrt{|G|}\big)\circ\phi^{-1}}{\sqrt{|G|}} |\jac \phi^{-1}| \mu_g
  \end{align}
  $= \prob_{\frac{\left(p\sqrt{|G|}\right)\circ\phi^{-1}}{\sqrt{|G|}} |\jac \phi^{-1}|} (\phi(U))$, where ${\phi^{-1}}^*(\cdot)$ is the pull-back of $\phi^{-1}$ on the $m$-forms on $\mf$.
  Combining both hands and noting the arbitrariness of $U$, we get the desired conclusion.
\end{proof}

Let $F_{(\cdot)}(\cdot)$ be the flow of $X$.
For any evolving distribution $p_t$ under dynamics $X$, by its definition, we have $p_t = {p_0}_{[F_t]}$.
Due to the property of flow that for any $s,t\in\re$, $F_{s+t} = F_s\circ F_t = F_t\circ F_s$, we have $p_{s+t} = {p_0}_{[F_{s+t}]} = {p_0}_{[F_s\circ F_t]} = ({p_0}_{[F_s]})_{[F_t]} = (p_s)_{[F_t]}$.

Now, for a fixed $t_0\in\re$, we let $p_t$ be the evolving distribution under $X$ that satisfies $p_{t_0}=p$, the target distribution.
For sufficiently small $t>0$, $F_{t}(\cdot)$ is a diffeomorphism on $\mf$.
Equipped with all these knowledge, we begin the final deduction:
\begin{align*}
  &-\left.\frac{\ud}{\ud t}\right|_{t=t_0}\kl( q_t||p ) = -\left.\frac{\ud}{\ud t}\right|_{t=0} \int_{\mf} q_{t_0+t} \log\frac{q_{t_0+t}}{p_{t_0}} \mu_g
  \intertext{(Treat $q_{t_0+t}$ as $(q_{t_0})_{[F_t]}$ and apply Eqn.~(\ref{eqn:transpdf}))}
  =& -\left.\frac{\ud}{\ud t}\right|_{t=0} \int_{\mf} \frac{\big(q_{t_0}\sqrt{|G|}\big)\circ F^{-1}_t}{\sqrt{|G|}} \left| \jac F^{-1}_t \right| \\
  &\cdot \left( \log \frac{\big(q_{t_0}\sqrt{|G|}\big)\circ F^{-1}_t}{\sqrt{|G|}} + \log \left| \jac F^{-1}_t \right| - \log p_{t_0} \right) \mu_g
  \intertext{(Apply $F_t^{-1}$ to the entire integral and invoke the theorem of global change of variables Eqn.~(\ref{eqn:changeofvar1}))}
  =& -\left.\frac{\ud}{\ud t}\right|_{t=0} \int_{F_t^{-1}(\mf)} \Bigg(\Bigg[ \frac{\big(q_{t_0}\sqrt{|G|}\big)\circ F^{-1}_t}{\sqrt{|G|}} \left| \jac F^{-1}_t \right| \\
  &\!\cdot \!\left(\! \log\! \frac{\big(q_{t_0}\!\sqrt{|G|}\big)\!\!\circ\! F^{-1}_t}{\sqrt{|G|}} + \log \!\left| \jac F^{-1}_t \!\right| \!-\! \log p_{t_0} \!\right)\!\!\Bigg]\!\!\circ\! F_t\!\Bigg) F_t^*(\mu_g)
  \intertext{($F_t^{-1}(\mf)=\mf$ since $F_t^{-1}$ is a diffeomorphism on $\mf$. $|\jac F_t^{-1}|\circ F_t = |\jac F_t|^{-1}$. See Eqn.~(\ref{eqn:changeofvar2}) for the expression of $F_t^*(\mu_g)$, the pull-back of $F_t$ on $\mu_g$)}
  =& -\left.\frac{\ud}{\ud t}\right|_{t=0} \int_{\mf} \frac{q_{t_0}\sqrt{|G|}}{\sqrt{|G|}\circ F_t} \left| \jac F_t \right|^{-1} \cdot \Bigg( \log \frac{q_{t_0}\sqrt{|G|}}{\sqrt{|G|}\circ F_t} \\
  &- \log \left| \jac F_t \right| - \log (p_{t_0}\!\circ F_t) \Bigg) \cdot \frac{\sqrt{|G|}\circ F_t}{\sqrt{|G|}} \left|\jac F_t\right| \mu_g \displaybreak
  \intertext{(Rearange terms)}
  =& -\!\left.\frac{\ud}{\ud t}\right|_{t=0} \!\int_{\mf} \!q_{t_0} \!\!\left[ \log q_{t_0} \!- \log\!\left(\! \frac{\big(p_{t_0}\sqrt{|G|}\big)\!\circ\! F_t}{\sqrt{|G|}} \left|\jac F_t\right| \!\right) \!\right] \!\mu_g
  \intertext{(Note the property of flow: $F_t = F_{-t}^{-1}$. Treat $p_{t_0-t}$ as $(p_{t_0})_{[F_{-t}]}$ and apply Eqn.~(\ref{eqn:transpdf}) inversely)}
  =& -\left.\frac{\ud}{\ud t}\right|_{t=0} \int_{\mf} q_{t_0} \left[ \log q_{t_0} - \log p_{t_0-t} \right] \mu_g
  &\intertext{($\mf$ is unchanged over time $t$ (otherwise an integral over the boundary would appear))}
  =& \! \int_{\mf} \! q_{t_0}\! \! \left.\frac{\partial}{\partial t} (\log p_{t_0-t})\right|_{t=0} \! \mu_g = -\! \int_{\mf} \! q_{t_0}\!\! \left.\frac{\partial}{\partial t} (\log p_{t_0+t})\right|_{t=0} \! \mu_g
  \intertext{(Refer to Eqn.~(3))}
  =& \int_{\mf} (q_{t_0}/p_{t_0}) \div(p_{t_0} X) \mu_g = \expect_{q_{t_0}} [\div(p_{t_0} X) / p_{t_0}]
  \intertext{(Property of divergence)}
  =& \expect_{q_{t_0}} \big[ X[\log p_{t_0}] + \div(X) \big].
\end{align*}
Due to the arbitrariness of $t_0$, we get the desired conclusion and complete the proof.

\subsection*{A4. Condition for Stein's Identity (Stein Class)}

Now we derive the condition for Stein's identity to hold.
We require $\expect_p[\div(pX)/p] = 0$, which is
\begin{align*}
  &\int_{\mf} \div(pX) \mu_g = \int_{\partial\mf} \mathrm{i}_{(pX)} \mu_g \\
  =& \sum_{i=1}^m \int_{\partial\mf} p\sqrt{|G|} (-1)^{i+1} X^i \bigwedge\! \ud x^{\neg i},
\end{align*}
where the first equality holds due to Gauss' theorem~(\citep{abraham2012manifolds}, Theorem~8.2.9), $\partial\mf$ is the boundary of $\mf$, $\mathrm{i}_{X}:A^{k}(\mf) \to A^{k-1}(\mf)$ is the interior product or contraction, $(\mathrm{i}_{X}\omega)(A)[v_1,\dots,v_{k-1}] = \omega(A)[X(A), v_1, \dots, v_{k-1}]$, $X^i$ is the $i$-th component of $X$ under the natural basis of some local c.s., $\bigwedge \ud x^{\neg i} \defas \ud x^1 \wedge \dots \wedge \ud x^{i-1} \wedge \ud x^{i+1} \wedge \dots \wedge \ud x^m$ with ``$\wedge$'' the wedge product (exterior product).

For manifolds like spheres, $\partial\mf$ is empty and the above integral is always zero, so the Stein class is $\tg(\mf)$.
If $\partial\mf$ is not empty, by its definition, around any point on $\partial\mf$ there exists a c.s. $(V, \Psi)$ with coordinate chart $(y^1, \dots, y^m)$ such that $\forall A\in\partial\mf \cap V, y^m(A) = 0$.
Thus $\ud y^m = 0$ and $(\partial\mf\cap V, \tilde\Psi=(\Psi^1, \dots, \Psi^{m-1}))$ is a local c.s. of $\partial\mf$.
Then the condition for Stein's identity to hold becomes
\begin{align*}
  \int_{\partial\mf} p\tilde X^m \sqrt{|\tilde G|} \ud y^1 \wedge \dots \wedge \ud y^{m-1} = 0,
\end{align*}
where $\tilde G$ is the Riemann metric tensor in $(\partial\mf\cap V, \tilde\Psi)$, and $\tilde X^{m}$ is the $m$-th component of $X$ in $(\partial\mf\cap V, \tilde\Psi)$.

For the case where $\mf$ is a compact subset of Euclidean space $\re^m$, around any point $A$ on the boundary $\partial\mf$, we take $(V, \Psi)$ such that $y^m=0$ and the natural basis $\{\partial_i|\partial_i \defas \frac{\partial}{\partial y^i}, i=1,\dots,m\}$ is orthonormal.
Then $|\tilde G(A)| = 1$ and $\partial_m$ is perpendicular to the span of $\{\partial_1, \dots, \partial_{m-1}\}$, which is the tangent space of $\partial\mf$ at $A$.
So $\partial_m$ is the unit normal $\vec n$ to $\partial\mf$, and $\tilde X^m$ is the component of $X$ along the normal direction, i.e. $\tilde X^m = X\cdot\vec n$.
Denote the volume form $\ud y^1 \wedge \dots \wedge \ud y^{m-1}$ on $\partial\mf$ as $\ud S$, then the condition for Stein's identity is $\int_{\partial\mf} p X \cdot \vec n \ud S$, which meets the conclusion in~\cite{liu2016stein}.
We provide a generalization of the conclusion to general Riemann manifold.

\subsection*{A5. Proof of Theorem~4}

For any $X\in\subtg$, let $f = \iota^{-1}(X)$ ($\iota$ is defined in the proof of Lemma~3), i.e. the only element in $\hilb_K$ such that $X = \grad f$.
Then in any c.s., $X = g^{ij}\partial_i f \partial_j$, and we have
\begin{align*}
  &\obj(X) \defas \expect_{q} \left[ X[\log p] + \div(X) \right] \\
  =& \expect_q \left[ X^j\partial_j\log(p\sqrt{|G|}) + \partial_j X^j \right] \\
  =& \expect_q \left[ g^{ij}\partial_i f \partial_j\log(p\sqrt{|G|}) + \partial_j(g^{ij}\partial_i f) \right] \\
  =& \expect_q \left[ \left( g^{ij}\partial_j\log(p\sqrt{|G|}) + \partial_j g^{ij} \right) \partial_i f + g^{ij}\partial_i\partial_i f \right].
\end{align*}
Now we invoke the conclusions of~\citet{zhou2008derivative} that $\partial_i K(A,\cdot), \partial_i\partial_j K(A,\cdot) \in \hilb_K$, and for any $f\in\hilb_K$, $\langle f(\cdot),\partial_i K(A,\cdot) \rangle_{\hilb_K} = \partial_i f(A)$, $\langle f(\cdot),\partial_i\partial_j K(A,\cdot) \rangle_{\hilb_K} = \partial_i\partial_j f(A)$:
\begin{align*}
  \obj(X) =& \expect_q \Big[ \left( g^{ij}\partial_j\log(p\sqrt{|G|}) + \partial_j g^{ij} \right) \left< f(\cdot),\partial_i K(A,\cdot) \right>_{\hilb_K} \\
  & + g^{ij} \left< f(\cdot),\partial_i\partial_j K(A,\cdot) \right>_{\hilb_K} \Big] \\
  =& \expect_q \Big[ \Big< f(\cdot), \left( g^{ij}\partial_j\log(p\sqrt{|G|}) + \partial_j g^{ij} \right) \partial_i K(A,\cdot) \\
  & + g^{ij} \partial_i\partial_j K(A,\cdot) \Big>_{\hilb_K} \Big] \\
  =& \Big< f(\cdot), \expect_q \Big[ \left( g^{ij}\partial_j\log(p\sqrt{|G|}) + \partial_j g^{ij} \right) \partial_i K(A,\cdot) \\
  & + g^{ij} \partial_i\partial_j K(A,\cdot) \Big] \Big>_{\hilb_K},
\end{align*}
where all the functions, differentiations and expectations are with argument $A$, if not specified.
Define
\begin{align*}
  \hat f(\cdot) =& \expect_q \Big[ \left( g^{ij}\partial_j\log(p\sqrt{|G|}) + \partial_j g^{ij} \right) \partial_i K(A,\cdot) \\
  & + g^{ij} \partial_i\partial_j K(A,\cdot) \Big] \\
  =& \expect_q \Big[ g^{ij} \partial_j\log(p\sqrt{|G|}) \partial_i K(A,\cdot) \\
  & + \partial_j \big( \sqrt{|G|} g^{ij} \partial_i K(A,\cdot) \big) / \sqrt{|G|} \Big] \\
  =& \expect_q \Big[ g^{ij} \partial_j\log(p\sqrt{|G|}) \partial_i K(A,\cdot) + \Delta K(A,\cdot) \Big],
\end{align*}
we have $\obj(X) = \langle f(\cdot),\hat f(\cdot) \rangle_{\hilb_K}$, and by the isometric isomorphism between $\hilb_K$ and $\subtg$, we have $\obj(X) = \langle \grad f,\grad \hat f \rangle_{\subtg} = \langle X,\hat X \rangle_{\subtg}$.


\subsection*{A6 Expressions in the Isometrically Embedded Space}
In this part of appendix we express the functional gradient in the isometrically embedded space, for general Riemann manifolds and two specific Riemann manifolds.

\subsubsection*{A6.1 For General Riemann Manifolds (Proposition~6)}
Let $\Xi$ be an isometric embedding of $\mf$ into $(\re^n,\{y^{\alpha}\}_{\alpha=1}^n)$.
For a coordinate system (c.s.) $(U,\Phi)$ of $\mf$ with coordinate chart $\{x^i\}_{i=1}^m$, define $\xi \defas \Xi\circ\Phi^{-1}$.
We first develop a key tool.
Let $h:\Xi(\mf)\to\re$ be a smooth function on the embedded manifold.
In $(U,\Phi)$ we define $f \defas h\circ\xi: U\to\re$ as a smooth function on an open subset of $\re^m$.
By the chain rule of derivative, we have
\begin{align*}
  \partial_i f = \partial_{\alpha} h \frac{\partial y^{\alpha}}{\partial x^{i}} = M\trs \nabla h,
\end{align*}
where $M\in\re^{n\times m}: M_{\alpha i} = \frac{\partial y^{\alpha}}{\partial x^i}$, and $\nabla h$ is the usual gradient of $h$ as a function on $\re^n$.
For isometric embedding, we have $g_{ij} = \sum_{\alpha=1}^n \frac{\partial y^{\alpha}}{\partial x^i}\frac{\partial y^{\alpha}}{\partial x^j}$, or in matrix form $G = M\trs M$.

From Eqn.~(5), we know that $\hat f' = \expect_q\big[ f_1 + f_2 \big]$ where $f_1 = (\grad K)[\log p]$ and $f_2 = \Delta K$.
Then in any c.s. of $\mf$,
\begin{align*}
  f_1 =& g^{ij} (\partial_i\log p) (\partial_j K) \\
  =& g^{ij}\frac{\partial y^{\alpha}}{\partial x^i} (\partial_{\alpha}\log p) \frac{\partial y^{\beta}}{\partial x^j} (\partial_{\beta} K) \\
  =& (\nabla\log p)\trs (M G^{-1} M\trs) \nabla K, \\
  f_2 =& g^{ij}(\partial_i K) (\partial_j\log\sqrt{|G|}) + \partial_i(g^{ij}\partial_j K) \\
  =& (\nabla\log\sqrt{|G|})\trs (M G^{-1} M\trs) \nabla K + \frac{\partial y^{\alpha}}{\partial x^i} \partial_{\alpha} (g^{ij} \frac{\partial y^{\beta}}{\partial x^j} \partial_{\beta}K) \\
  =& (\nabla\log\sqrt{|G|})\trs (M G^{-1} M\trs) \nabla K \\
  &+ (M\trs\nabla)\trs(G^{-1} M\trs \nabla K) \\
  =& (\nabla\log\sqrt{|G|})\trs (M G^{-1} M\trs) \nabla K \\
  & + \Big( (M\trs\nabla)\trs (G^{-1}M\trs) \Big) \nabla K \\
  & + \tr\Big( (\nabla\nabla\trs K) (M G^{-1} M\trs) \Big).
\end{align*}

To further simplify the expression, we mention it here that the operator $M G^{-1} M\trs = M (M\trs M)^{-1} M\trs$ is the orthogonal projection onto the column space of $M$, which is the tangent space of the embedded manifold.
With $N\in\re^{n\times(n-m)}$ consisting of a set of orthonormal basis of the orthogonal complement of the tangent space, we can express the operator as $(I_n-NN\trs)$.
Details are presented in \citet{byrne2013geodesic} or Appendix A.2 of \citet{liu2016stochastic}.
The advantage of using $N$ instead of $M$ is that it is independent of c.s. of $\mf$, so we do not need to choose a set of c.s. covering $\mf$ and conduct calculation in each c.s.
Additionally, it is usually easier to find, and the expression with $N$ is more computationally economic.
With this replacement, we have
\begin{align*}
  f_1 \!+\! f_2 =& (\nabla\log p\sqrt{|G|})\trs (M G^{-1} M\trs) \nabla K \\
  &+ \Big( (M\trs\nabla)\trs (G^{-1}M\trs) \Big) \nabla K \\
  &+ \tr\Big( (\nabla\nabla\trs K) (M G^{-1} M\trs) \Big) \\
  =& (\nabla\log p\sqrt{|G|})\trs (I_n - NN\trs) \nabla K \\
  &+ \Big( (M\trs\nabla)\trs (G^{-1}M\trs) \Big) \nabla K \\
  &+ \tr\Big( (\nabla\nabla\trs K) - (\nabla\nabla\trs K)NN\trs \Big) \\
  =& (\nabla\log p\sqrt{|G|})\trs (I_n - NN\trs) \nabla K \\
  &+ \Big( (M\trs\nabla)\trs (G^{-1}M\trs) \Big) \nabla K \\
  &+ \nabla\trs\nabla K - \tr\Big( N\trs(\nabla\nabla\trs K)N \Big).
\end{align*}

Finally, $\hat X = \grad \hat f = g^{ij}\partial_i \hat f \partial_j = g^{ij} \frac{\partial y^{\alpha}}{\partial x^i} \partial_{\alpha} \hat f \frac{\partial y^{\beta}}{\partial x^j} \partial_{\beta} = M G^{-1} M \nabla\hat f = (I_n - NN\trs)\nabla \hat f$, which finishes the derivation.

Note that $M$ and $G$ depend on the choice of c.s. of $\mf$.
Note also that the parametric form of $\Xi^{-1}$ and $\xi^{-1}$ may not be unique (e.g. $\Xi^{-1}(y)=y$ and $\Xi^{-1}(y)=y+(1-y\trs y)$ are both valid on $\Xi(\sph^{n-1})$, but they give different gradients).
Nevertheless, since $\hat f'$ is already a well-defined smooth function on $\mf$ due to Eqn.~(5), its expression in the embedded space \wrt any c.s. and any parametric form of $\Xi^{-1}$ and $\xi^{-1}$ should give the same result.
We introduce $N$ in hope to explicitly express this independence, and we succeed for $\hat X'$ given $\hat f'$.
For $\hat f'$, it is still a future work to make its expression explicitly independent of c.s. of $\mf$ and parametric form of $\Xi^{-1}$ and $\xi^{-1}$.

\subsubsection*{A6.2 For Hyperspheres (Proposition~7)}
Let $\sph^{n-1}$ be isometrically embedded in $\re^n$ via $\Xi:y\mapsto y$ the identity mapping.
We select the c.s. $(U,\Phi)$ as the upper semi-hypersphere: $U\defas \{y\in\re^n| y\trs y = 1, y_n > 0\}$, $\Phi: y\mapsto (y_1,\dots,y_{n-1})\trs\in\re^{n-1}$.
Then we have $\Omega \defas \Phi(U) = \{x\in\re^{n-1} | x\trs x < 1\}$, and $\xi:\Omega\to\re^n, x\mapsto (x_1,\dots,x_{n-1}, \sqrt{1-x\trs x})\trs$.
Furthermore,
\begin{equation*}
  M = \left( \begin{array}{c}
	I_{n-1} \\
	-\frac{x\trs}{\sqrt{1-x\trs x}}
  \end{array}\right),
\end{equation*}
and $G = I_{n-1} + \frac{xx\trs}{1-x\trs x}$, $G^{-1} = I_{n-1} - xx\trs$, $|G| = \frac{1}{1-x\trs x}$.
The tangent space of $\Xi(\sph^{n-1})$ at $y\in\re^n$ is a plane perpendicular to the direction of $y$, thus the orthogonal complement of the tangent space is the linear span of $y$, which indicates that $N=y$.
Plugging in all these quantities in Eqn.~(7), we can derive the result of Eqn.~(8).

\subsubsection*{A6.3 For the Product Manifold of Hyperspheres}
To fit the inference task of Spherical Admixture Model \cite{reisinger2010spherical} (SAM), we need to further specify the manifold as the product manifold of hyperspheres, $(\sph^{n-1})^P$.
Let $(\mf)^P$ be a general product manifold.
For any point $A=(A_{(1)}, \dots, A_{(P)})\in(\mf)^P$, $(\bigotimes_{k=1}^P U_{(k)}, \bigotimes_{k=1}^P \{x_{(k)}^{i_{(k)}}\}_{i_{(k)}=1}^{n-1})$ is a local c.s., where each $(U_{(k)},\{x_{(k)}^{i_{(k)}}\}_{i_{(k)}=1}^{n-1}$ is a local c.s. of $\mf_{(k)}$ around $A_{(k)}$.
In this c.s., $\{\partial_{(k),i_{(k)}} | k=1,\dots,P, i_{(k)} = 1,\dots,n-1\}$ is the natural basis, and the Riemann structure in the tangent space is defined by direct product of inner product space: $g_{(k,\ell),i_{(k)},j_{(\ell)}} = \delta_{k\ell} g_{i_{(k)},j_{(\ell)}}$.
By this construction, one can derive the expressions for the gradient of a smooth function $f\in\cont^{\infty}((\mf)^P)$ and the divergence of a vector field $X=\sum_{k=1}^P X_{(k)}^{i_{(k)}} \partial_{(k),i_{(k)}} \in \tg((\mf)^P)$:
$\grad f = \sum_{k=1}^P g^{i_{(k)}j_{(k)}}_{(k)} \partial_{(k),i_{(k)}}f \partial_{(k),j_{(k)}}$,
$\div(X) = \sum_{k=1}^P \left( \partial_{(k), i_{(k)}} X^{i_{(k)}}_{(k)} + X^{i_{(k)}}_{(k)} \partial_{(k),i_{(k)}} \log\sqrt{ |G_{(k)}| } \right)$, as well as the Beltrami-Laplacian $\Delta f$.

For $y=(y_{(1)}, \dots, y_{(P)})\in (\sph^{n-1})^P$ with each $y_{(k)}\in\sph^{n-1}$, and kernel $K(y,y') = \prod_{k=1}^P K_{(k)}(y_{(k)}, y_{(k)}')$, we have the following result:
\begin{prop}
  \label{thm:prodsph}
  $\hat X'_{(\ell)} = (I_d - y_{(\ell)} {y_{(\ell)}'}\trs)\nabla_{(\ell)}' \hat f'$,
  \begin{align}
    \label{eqn:prodsph}
    \hat f' =& \expect_q \Big[ K \sum_{k=1}^{P} \Big[ (\nabla_{\!(k)}\!\log p\big)\!\trs (\nabla_{\!(k)} \!\log K_{(k)}) + \nonumber\\
      & \nabla_{\!(k)}\trs\!\nabla_{\!(k)} \log K_{(k)} - y_{(k)}\trs\big( \nabla_{\!(k)}\!\nabla_{\!(k)}\trs K_{(k)} \big) y_{(k)} \nonumber\\
      & + \big\|\nabla_{\!(k)} \!\log K_{(k)} \big\|^2 - (y_{(k)}\trs\!\nabla_{\!(k)} \!\log K_{(k)})^2 \nonumber\\
      & - (y_{(k)}\trs\!\nabla_{\!(k)}\!\log p + n - 1) y_{(k)}\trs \!\nabla_{\!(k)} \!\log K_{(k)} \Big] \Big].
  \end{align}
\end{prop}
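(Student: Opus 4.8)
The plan is to reduce the computation on $(\sph^{n-1})^{P}$ to the single-sphere result of Proposition~\ref{thm:sphemb}, using that the direct-product Riemann structure makes every differential operator in play split blockwise, and then to exploit the multiplicative form $K=\prod_{k}K_{(k)}$ to pull a common factor $K$ out of every term. Concretely, $(\sph^{n-1})^{P}$ embeds isometrically into $\re^{nP}$ via the blockwise identity embedding, whose normal space at $y=(y_{(1)},\dots,y_{(P)})$ is spanned blockwise by the $y_{(k)}$, and — as recorded in Appendix A6.3 — for the product metric the gradient and the Beltrami--Laplace operator decompose as $\grad f=\sum_{k=1}^{P}\gradn_{(k)}f$ and $\Delta f=\sum_{k=1}^{P}\Delta_{(k)}f$, where $\gradn_{(k)}$ and $\Delta_{(k)}$ are exactly the single-sphere operators acting only on the block $y_{(k)}$.

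Plugging the product kernel into the functional-gradient formula of Theorem~\ref{thm:optsolution}, Eqn.~(\ref{eqn:optsolution}), the decomposition gives $\hat f'=\expect_{q}\big[\sum_{k=1}^{P}\big((\gradn_{(k)}K)[\log p]+\Delta_{(k)}K\big)\big]$, so it suffices to evaluate one summand and express it in the ambient coordinates of the $k$-th copy of $\re^{n}$. For that I would apply the embedded-space identities of Proposition~\ref{thm:sphemb} to the $k$-th factor, with $y_{(k)}$ the unit normal: $\gradn_{(k)}h=(I_{n}-y_{(k)}y_{(k)}\trs)\nabla_{(k)}h$ and $\Delta_{(k)}h=\nabla_{(k)}\trs\nabla_{(k)}h-y_{(k)}\trs(\nabla_{(k)}\nabla_{(k)}\trs h)y_{(k)}-(n-1)\,y_{(k)}\trs\nabla_{(k)}h$, so that the $k$-th summand takes exactly the form of the integrand of Eqn.~(\ref{eqn:sphemb}) with $\nabla,y$ replaced by $\nabla_{(k)},y_{(k)}$.

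Then I would use that $K_{(j)}$ is constant in $y_{(k)}$ for $j\neq k$, whence $\nabla_{(k)}K=K\,\nabla_{(k)}\log K_{(k)}$ and $\nabla_{(k)}\nabla_{(k)}\trs K=K\big(\nabla_{(k)}\log K_{(k)}\,\nabla_{(k)}\trs\log K_{(k)}+\nabla_{(k)}\nabla_{(k)}\trs\log K_{(k)}\big)$. Substituting these, one can factor a common $K$ out of the $k$-th summand: the gradient terms contribute $(\nabla_{(k)}\log p)\trs\nabla_{(k)}\log K_{(k)}$ together with a normal correction $-(y_{(k)}\trs\nabla_{(k)}\log p)(y_{(k)}\trs\nabla_{(k)}\log K_{(k)})$, the trace term contributes $\nabla_{(k)}\trs\nabla_{(k)}\log K_{(k)}$ and the squared-gradient piece $\|\nabla_{(k)}\log K_{(k)}\|^{2}$, and the Hessian term contributes the squared-normal piece $(y_{(k)}\trs\nabla_{(k)}\log K_{(k)})^{2}$; regrouping together with the $(n-1)$ term reproduces the summand of Eqn.~(\ref{eqn:prodsph}). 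Finally $\hat X'$ is obtained by taking the gradient of the now coordinate-invariant smooth function $\hat f'$; the blockwise decomposition of the gradient isolates the $\ell$-th component, and Proposition~\ref{thm:sphemb}'s tangential-projection formula applied to the $\ell$-th factor gives the stated $\hat X'_{(\ell)}$. One could equivalently start from the general formula Eqn.~(\ref{eqn:genemb}) with $M$, $G$ block-diagonal and $N=\mathrm{diag}(y_{(1)},\dots,y_{(P)})$, which amounts to the same product-rule computation.

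The step I expect to be the main obstacle is precisely this bookkeeping: inside each of $\gradn_{(k)}$ and $\Delta_{(k)}$ one must keep straight which pieces are tangential (and survive the projection) and which are normal-direction corrections, re-expand the product rule for $\nabla_{(k)}K$ and $\nabla_{(k)}\nabla_{(k)}\trs K$, and verify that the squared-gradient and squared-normal terms come out with the correct signs after dividing through by $K$. A minor caveat I would record, as in Appendix A6.1, is that the ambient extensions of $K$ and of $\Xi^{-1}$ are not unique; but Eqn.~(\ref{eqn:optsolution}) already certifies that $\hat f'$, and hence $\hat X'$, is a genuine coordinate-invariant object, which is what makes the blockwise computation in ambient coordinates legitimate.
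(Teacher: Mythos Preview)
Your proposal is correct and follows precisely the route the paper itself takes: Appendix~A6.3 merely records the block-diagonal product structure (so that $\grad$, $\div$, and $\Delta$ split as sums over $k$), states the product kernel $K=\prod_{k}K_{(k)}$, and then asserts Proposition~\ref{thm:prodsph} without further detail, leaving exactly the bookkeeping you describe---reduce each summand to the single-sphere integrand of Proposition~\ref{thm:sphemb}, then substitute $\nabla_{(k)}K=K\,\nabla_{(k)}\log K_{(k)}$ and the product-rule Hessian identity and factor out $K$. Your anticipated sign-tracking (the squared-gradient and squared-normal pieces coming from the trace and the $y_{(k)}^{\top}(\cdot)y_{(k)}$ terms of the Hessian, respectively) is exactly the computation needed.
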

This proposition directly constructs the algorithm of RSVGD for the inference task of SAM, where each $y_{(k)}$ is a topic lying on a hypersphere.

\subsubsection*{A7 Implementation of RSVGD for Bayesian Logistic Regression}
From the model description in the main context, we have
\begin{eqnarray*}
  & \mbox{log-prior:} & \log p_0(w) = -\frac{w\trs w}{2\alpha} + \const, \\
  & \mbox{log-likelihood:} & \log p(\{y_d\}|w, \{x_d\}) \\
  & & \!\!\!\!\!\!\! = \sum_{d=1}^D \left( y_d w\trs x_d - \log(1+e^{w\trs x_d}) \right) + \const, \\
  & \mbox{log-posterior:} & \log p(w | \{y_d\}, \{x_d\}) = -\frac{w\trs w}{2\alpha} \\
  & & + \sum_{d=1}^D \left( y_d w\trs x_d - \log(1+e^{w\trs x_d}) \right) + \const.
\end{eqnarray*}
So we have the gradient of the target density
\begin{align*}
  \nabla \log p(w|\{y_d\}, \{x_d\}) = -\frac{1}{\alpha} w + \sum_{d=1}^D \left( y_d - s(w\trs x_d) \right) x_d,
\end{align*}
and the Riemann metric tensor
\begin{align*}
  G(w) =& \fisher\big(p(\{y_d\}|w, \{x_d\})\big) - \nabla\nabla\trs \log p_0(w) \\
  =& \expect_{p(\{y_d\}|w, \{x_d\})} \big[\big(\nabla \log p(\{y_d\}|w, \{x_d\})\big) \\
  & \phantom{\expect_{p(\{y_d\}|w, \{x_d\})} \big[} \big(\nabla \log p(\{y_d\}|w, \{x_d\})\big)\trs \big] \\
  & - \nabla\nabla\trs \log p_0(w) \\
  =& \sum_{d=1}^D c_d x_d x_d\trs + \frac{1}{\alpha} I_m,
\end{align*}
where $\fisher(\cdot)$ is the Fisher information of a distribution, and $c_d = s(w\trs x_d) (1 - s(w\trs x_d))$.
For $G^{-1}$, direct numerical inversion is applicable, with time complexity $\order(m^3)$.
Another method, with time complexity $\order(m^2 D)$, can be derived by iteratively applying the Sherman-Morrison formula \cite{sherman1950adjustment}:
\begin{align*}
  & G^{-1}_d = G^{-1}_{d-1} - \frac{c_d (G^{-1}_{d-1} x_d) (G^{-1}_{d-1} x_d)\trs}{1 + c_d x_d\trs G^{-1}_{d-1} x_d}, \\
  & G^{-1} = G^{-1}_D, G^{-1}_0 = \alpha I_m.
\end{align*}
For small datasets, or for mini-batch of data, this implementation would be advantageous.
But in our experiments we found that direct inversion is still more efficient.

To continue, we first note $\partial_i G \defas \partial_{w_i} G = \sum_{d=1}^D f_d x_{di} x_d x_d\trs$, where $f_d = \frac{1-e^{w\trs x_d}}{1+e^{w\trs x_d}} c_d$.
Note also that $\partial_i G_{jk} = \sum_{d=1}^D f_d x_{di} x_{dj} x_{dk}$, so the indices $i,j,k$ are completely permutable.
Particularly, $\partial_i G_{jk} = \partial_j G_{ik}$.
For the gradient of the log-determinant,
\begin{align*}
  \partial_i \log |G(w)| = \tr(G^{-1} \partial_i G) = \sum_{d=1}^D f_d (x_d\trs G^{-1} x_d) x_{di},
\end{align*}
and for the gradient of the inverse matrix,
\begin{align*}
  & \sum_{j=1}^m \partial_j G^{-1}_{ij}(w) = - G^{-1}_{(i,:)} \sum_{j=1}^m (\partial_j G) G^{-1}_{(:,j)} \\
  =& - \sum_{k=1}^m G^{-1}_{(i,k)} \sum_{j=1}^m \sum_{\ell=1}^m (\partial_j G)_{(k,\ell)} G^{-1}_{(\ell, j)} \\
  =& - \sum_{k=1}^m G^{-1}_{(i,k)} \sum_{j=1}^m \sum_{\ell=1}^m (\partial_k G)_{(j,\ell)} G^{-1}_{(\ell, j)} \\
  =& - \sum_{k=1}^m G^{-1}_{(i,k)} \tr\big( (\partial_k G) G^{-1} \big) \\
  =& - G^{-1}_{(i,:)} \nabla \log |G(w)|.
\end{align*}
Now all the quantities needed for RSVGD (Eqn.~(6)) are derived.

\end{document}